\theoremstyle{plain}
\renewcommand{\vec}[1]{\bm{#1}}
\newcommand{\scrD}{\ensuremath{\mathcal{D}}}
\newcommand{\scrE}{\ensuremath{\mathcal{E}}}
\newcommand{\scrX}{\ensuremath{\mathcal{X}}}
\newcommand{\scrY}{\ensuremath{\mathcal{Y}}}
\newcommand{\vocab}{\ensuremath{\Omega}}
\newcommand{\lb}{\textup{lb}}
\newcommand{\ub}{\textup{ub}}
\newcommand{\low}{\ensuremath{\mathsf{low}}}
\newcommand{\high}{\ensuremath{\mathsf{high}}}
\newcommand{\del}{\ensuremath{\mathsf{del}}}
\newcommand{\ins}{\ensuremath{\mathsf{ins}}}
\newcommand{\sub}{\ensuremath{\mathsf{sub}}}
\newcommand{\mask}{\ensuremath{\mathsf{mask}}}
\newcommand{\mdel}{\ensuremath{\epsilon}}
\newcommand{\medit}{\ensuremath{\vec{\epsilon}}}
\newcommand{\fdel}{\ensuremath{\psi}}
\newcommand{\pert}[1]{\bar{#1}}
\newcommand{\radius}{r}
\newcommand{\base}[1]{{#1}_{\mathrm{b}}}
\newcommand{\naturals}{\mathbb{N}}
\newcommand{\ind}[1]{\mathbf{1}_{#1}}
\newcommand{\opset}{\mathsf{o}}
\newcommand{\revbinom}[3]{\mathcal{B}_{#3}(#1, #2)}
\newcommand{\veryshortarrow}[1][3pt]{\mathrel{%
   \hbox{\rule[\dimexpr\fontdimen22\textfont2-.2pt\relax]{#1}{.4pt}}%
   \mkern-4mu\hbox{\usefont{U}{lasy}{m}{n}\symbol{41}}}}
\newcommand{\scriptveryshortarrow}[1][3pt]{{%
    \hbox{\rule[\scriptratio\dimexpr\fontdimen22\textfont2-.2pt\relax]
               {\scriptratio\dimexpr#1\relax}{\scriptratio\dimexpr.4pt\relax}}%
   \mkern-4mu\hbox{\let\f@size\sf@size\usefont{U}{lasy}{m}{n}\symbol{41}}}}
\DeclareMathOperator{\dist}{dist}
\DeclareMathOperator{\apply}{apply}
\DeclareMathOperator{\E}{\mathbb{E}}
\DeclareMathOperator*{\argmax}{arg\,max}
\DeclarePairedDelimiterX{\abs}[1]{\lvert}{\rvert}{#1}
\DeclarePairedDelimiterX{\norm}[1]{\lVert}{\rVert}{#1}
\DeclarePairedDelimiterX{\ceil}[1]{\lceil}{\rceil}{#1}
\DeclarePairedDelimiterX{\floor}[1]{\lfloor}{\rfloor}{#1}
\newcommand{\imdb}{IMDB}
\newcommand{\yelp}{Yelp}
\newcommand{\lun}{LUN}
\newcommand{\assassin}{SpamAssassin}
\newcommand{\ns}{NoSmooth}
\newcommand{\freelb}{FreeLB}
\newcommand{\ranmask}{RanMASK}
\newcommand{\randel}{CERT-ED}
\newcommand{\vardel}{AdaptDel}
\newcommand{\optdel}{AdaptDel+}
\newcommand{\certacc}{\ensuremath{\mathrm{CertAcc}}}
\newcommand{\CR}{\ensuremath{\mathrm{CR}}}
\DeclareRobustCommand\onedot{\futurelet\@let@token\@onedot}
\def\@onedot{\ifx\@let@token.\else.\null\fi\xspace}
\def\eg{\emph{e.g}\onedot}
\newcommand{\clare}{Clare}
\newcommand{\bae}{BAE-I}
\newcommand{\bertattack}{BERT-Attack}
\newcommand{\deepwordbug}{DeepWordBug}
\newcommand{\textfooler}{TextFooler}
\newcommand\theDC{}
\newcommand\boldcell{\relax\ifmmode$\egroup\fi\bfseries\boldmath\theDC}
\newcolumntype{E}[3]{>{\def\theDC{\DC@{#1}{#2}{#3}}\theDC}c<{\DC@end}}
\newcolumntype{d}[1]{E{.}{.}{#1}}
\newcolumntype{L}[1]{>{\raggedright\let\newline\\\arraybackslash\hspace{0pt}}m{#1}}
\newcolumntype{C}[1]{>{\centering\let\newline\\\arraybackslash\hspace{0pt}}m{#1}}
\title{AdaptDel: Adaptable Deletion Rate Randomized Smoothing for Certified Robustness}
\author{
  Zhuoqun Huang \\
  University of Melbourne \\
  \texttt{zhuoqun@unimelb.edu.au} \\
  \And
  Neil G.~Marchant \\
  University of Melbourne \\
  \texttt{nmarchant@unimelb.edu.au} \\
  \And
  Olga Ohrimenko \\
  University of Melbourne \\
  \texttt{oohrimenko@unimelb.edu.au} \\
  \And 
  Benjamin I.~P.~Rubinstein \\
  University of Melbourne \\
  \texttt{brubinstein@unimelb.edu.au} \\
}
\begin{document}

\maketitle

\begin{abstract}

We consider the problem of certified robustness for sequence classification against edit distance perturbations.
Naturally occurring inputs of varying lengths (e.g., sentences in natural language processing tasks) present a challenge to current methods
that employ fixed-rate deletion mechanisms and lead to suboptimal performance.
To this end, we introduce AdaptDel methods with adaptable deletion rates that dynamically adjust based on input properties.
We extend the theoretical framework of randomized smoothing to variable-rate deletion,
ensuring sound certification with respect to edit distance.
We achieve strong empirical results in natural language tasks, observing up to 30 orders of magnitude improvement to median cardinality of the certified region,
over state-of-the-art certifications.

\end{abstract}

\section{Introduction} \label{sec:intro}

Recent advancements in machine learning have led to significant improvements across many domains, however 
the fragility of models to benign noise \citep{belinkov2018synthetic} and adversarial perturbations
\citep{goodfellow2015explaining} continues to undermine model reliability.
While empirical defenses have shown promising results in enhancing 
model robustness \citep{ren2019generating,zang2020word,wang2021adversarial,zhu2020freelb,ivgi2021achieving,li2020bertattack},
they often lack guarantees when facing adaptive adversaries capable of exploiting the model's defense strategies.
Consequently, certified defenses have emerged as a promising alternative,
providing robustness guarantees against arbitrary attacks within a defined threat model
\citep{huang2019achieving,jia2019certified,ye2020safer,wang2021certified,zeng2023certified}.
Among certified defenses, randomized smoothing \citep{cohen2019certified,lecuyer2019certified}
has gained widespread recognition for its scalability to large models and strong robustness guarantees.

Sequence prediction tasks, fundamental to natural language processing, bioinformatics, and time series analysis, 
present unique challenges for certified robustness. 
Established threat models, such as bounded $\ell_p$ perturbations, are fundamentally incompatible with variable-length, 
discrete sequences. 
Moreover, the prevailing additive noise-based smoothing mechanisms are ill-defined for discrete data. 
To address these challenges, smoothing mechanisms have been proposed that additively perturb and 
permute sequences in embedding space \citep{zhang2024textcrs} or delete sequence 
elements~\citep{huang2023rsdel}. 
The deletion-based mechanism provides certificates under bounded edit distance perturbations, whereas the 
perturbation\slash permutation-based mechanisms provide certificates in embedding space and 
typically fail to cover edit distance perturbations of any magnitude~\citep[Appendix~F]{huang2024cert}. 
This focus on variable-length inputs is a key differentiator from many certified defenses in computer vision, where 
fixed-size inputs are the norm and length-dependent adaptation is less critical.

While deletion-based mechanisms have shown promise, one area where existing mechanisms fall short is their use of a 
fixed deletion rate for all inputs. 
This one-size-fits-all approach disregards potential variations in sequence length, complexity, or domain-specific 
characteristics. 
For instance, \citet{huang2024cert} observed that 
longer text sequences can typically tolerate higher deletion rates without compromising predictive accuracy.
This insight suggests a fixed deletion rate may lead to suboptimal trade-offs between robustness and 
performance, particularly for inputs that could benefit from more nuanced treatment.

The potential of input-dependent smoothing for discrete sequences remains largely unexplored. Prior work 
has considered Gaussian smoothing with input-dependent noise for $\ell_p$ certification of real-valued inputs. 
Other work has taken an ad hoc approach, where \citeauthor{cohen2019certified}'s certificate for uniform Gaussian 
noise is corrected iteratively at test time \citep{eiras2022ancer,alfarra2022data}. 
However, this results in a model where each output depends on the chain of previous outputs, which complicates 
interpretability, may lead to error propagation and increases computational complexity and memory requirements. 
A more rigorous approach was taken by \citet{sukenik2022intriguing}, who derived a certificate for input-dependent 
Gaussian noise. 

In this paper, we extend input-dependent smoothing to discrete sequences, where a new analysis is required 
to obtain sound edit distance certificates. 
We begin by introducing a general framework for deletion smoothing with input-dependent deletion rates, which 
provides bounds on the smoothed model's confidence scores under perturbations. 
Building on these bounds, we propose two methods that support efficient edit distance certification: 
\vardel, which uses a length-dependent deletion rate,
and \optdel, which further optimizes the deletion rate using input binning and empirical calibration.
Our contributions are summarized as follows:
\begin{itemize}
    \item We develop a theoretical foundation for variable deletion rates,
    enabling robust smoothing mechanisms that adapt to input length.
    \item We propose \vardel\ and \optdel, two novel adaptive smoothing techniques,
    which enable computationally efficient certified robustness while maintaining high accuracies.
    \item We evaluate our methods on four natural language processing tasks with varying input sizes,
    demonstrating that \vardel\ and \optdel\ achieve stronger robustness on all datasets at the same certified accuracy,
    with little to no degradation in clean accuracy: for example, we observe up to 30 orders of magnitude improvement to median cardinality of the certified region.
\end{itemize}

\section{Preliminaries} \label{sec:prelim}

\paragraph{Task}
Let $\scrX = \vocab^*$ denote the set of finite-length sequences with tokens drawn from vocabulary $\vocab$.
For a sequence $\vec{x} \in \scrX$, let the length be denoted by $\abs{\vec{x}}$,
and let the $k$-th token be denoted by $x_k$.
We consider a sequence classification task,
where a model $f$ predicts the class $y \in \scrY$ of input sequence~$\vec{x}$. 

Though our methods apply to generic sequence classification, we focus on text for concreteness,
where text is mapped to a sequence using a tokenizer.
For instance, in topic classification,
the input text might be a news article and the possible classes are topics like ``politics'' and ``sports'' \citep{zhang2015character}.

\paragraph{Edit Distance}
While it is common to consider $\ell_p$-based threat models for the image domain,
textual data is subject to more structured perturbations, such as edit-based perturbations~\citep{gao2018black,
garg2020bae,li2021contextualized}.
Given two sequences $\vec{x}, \pert{\vec{x}} \in \scrX$, and a subset of edit operations $\opset \subseteq \{\del, \ins, \sub\}$,
we define the edit distance $\dist_{\opset}(\vec{x} \veryshortarrow \pert{\vec{x}})$ as the minimum number
of edits required to transform $\vec{x}$ into $\pert{\vec{x}}$. 
We note that edit distance is not symmetric if $\opset$ contains $\del$ without $\ins$ or vice versa, which explains 
our use of $\veryshortarrow$ to emphasize directionality.  
Although $\dist_{\opset}$ is not necessarily a distance metric, this is not a problem for our analysis---our 
results hold regardless.

\paragraph{Edit Distance Robustness}
Given a classifier $f$, we consider the problem of certifying its robustness under bounded edit distance perturbations.
Specifically, we are interested in certifying that the classifier's prediction is invariant
to any input perturbation of magnitude $\radius$ in edit distance,
\begin{equation}
  \forall \pert{\vec{x}} \in B_{\radius}(\vec{x}; \opset): f(\vec{x}) = f(\pert{\vec{x}}), 
  \label{eqn:edit-dist-cert}
\end{equation} 
where 
$
  B_{\radius}(\vec{x}; \opset) \coloneqq \{ \pert{\vec{x}} \in \scrX : 
    \dist_{\opset}(\vec{x} \veryshortarrow \pert{\vec{x}}) \leq \radius \} %
$ denotes the edit distance ball of radius $\radius$ centered on $\vec{x}$.

As is typical for randomized smoothing, we will develop mechanisms that produce a randomized 
radius $\radius$ given input sequence $\vec{x}$, such that with high probability 
$1-\alpha$, this radius is a valid certificate at $\vec{x}$.
We achieve this certification against general edit distance using a smoothing mechanism based exclusively on 
deletions, which we introduce and justify in Section~\ref{sec:method}.

\section{Adaptable Deletion Certification} \label{sec:method}

We now present our framework for deletion-based randomized smoothing with a deletion rate that adapts depending on the
input.
Our framework generalizes fixed-rate randomized deletion \citep{huang2023rsdel}, achieving edit distance
certificates with superior robustness-accuracy trade-offs over a wider range of inputs.
To begin, we review randomized smoothing in Section~\ref{sec:randomized-smoothing} and describe our variable-rate
deletion mechanism in Section~\ref{sec:deletion-mechanism}.
Next, in Section~\ref{sec:deletion-analysis} we obtain bounds on the smoothed classifier's scores under input
perturbations, where we make no assumptions about the deletion rate's dependence on the input.
Although the adaptive setting makes the analysis significantly more complex, we obtain bounds that can
be evaluated algorithmically by solving a bounded knapsack problem.
Finally, in Section~\ref{sec:deletion-specialization} we specialize our framework to length-dependent deletion rates,
which allows us to efficiently compute an edit distance certificate.%

\subsection{Randomized Smoothing} \label{sec:randomized-smoothing}
While various techniques have been developed for certified robustness,
randomized smoothing \citep{cohen2019certified}
is a leading approach owing to its scalability to large models and the size of the robust radii it can achieve.
Given a \emph{base classifier} $\base{f}: \scrX \rightarrow \scrY$,
and a \emph{smoothing mechanism} $\phi: \scrX \rightarrow \scrD(\scrX)$ that maps the input to a distribution of perturbed inputs,
randomized smoothing defines a \emph{smoothed classifier} $f: \scrX \rightarrow \scrY$ such that
\begin{equation*}
    f(\vec{x}) \coloneq \argmax_{y \in \scrY} p_y(\vec{x}; \base{f}, \phi)
\end{equation*}
where $p_y(\vec{x}; \base{f}, \phi)$ is the smoothed class probability
$p_y(\vec{x}; \base{f}, \phi) = \Pr_{\vec{z} \sim \phi(\vec{x})}\left[ \base{f}(\vec{z}) = y\right]$.
The effectiveness of randomized smoothing heavily relies on the choice of smoothing mechanism.
An ideal mechanism will achieve large robustness certificates in the desired threat model while minimizing the performance degradation.

\subsection{Variable-Rate Deletion} \label{sec:deletion-mechanism}
We now present a smoothing mechanism that randomly deletes sequence tokens, where the rate of deletion varies
as a function of the input sequence.
Our mechanism generalizes the fixed-rate deletion mechanism proposed by \citet{huang2023rsdel}.

We begin by defining notation to express the mechanism symbolically.
Let $\medit = (\mdel_1, \ldots, \mdel_{\abs{\vec{x}}})$ be a vector of deletion indicator variables for input text $\vec{x}$ containing $\abs{\vec{x}}$ tokens,
where $\mdel_i = 1$ if the $i$-th token is to be retained and $\mdel_i = 0$ if it is to be deleted.
We slightly abuse notation and let $\abs{\medit} \coloneq \sum_{i} \mdel_i$ denote the number of tokens retained.
The space of possible deletion indicators for input text $\vec{x}$ is denoted as
$\scrE(\vec{x}) = \{0, 1\}^{\abs{\vec{x}}}$.
Our variable-rate deletion mechanism is parametrized by a deletion probability function $\fdel \colon \scrX \to [0, 1]$,
that maps an input $\vec{x}$ to a deletion probability shared across tokens.
The probability mass function for $\medit$ at input $\vec{x}$ takes the form
$q(\medit | \vec{x}) = \prod_i \fdel(\vec{x})^{1 - \mdel_i} (1 - \fdel(\vec{x}))^{\mdel_i}$, i.e.,
each deletion indicator is drawn i.i.d.\ from a Bernoulli distribution with probability $\fdel(\vec{x})$.
We write $\apply(\vec{x}, \medit)$ to denote the resultant sequence after deleting the
tokens referenced in $\medit$ from input text $\vec{x}$.
The end result of this process is a random subsequence of $\vec{x}$, which we denote by
$\phi_{\fdel}(\vec{x})$, emphasizing the dependence on $\fdel$.

Using this notation, we can express the smoothed class probability as a sum
over the space of deletion indicator variables:
\begin{gather}
    p_y(\vec{x}; \base{f}, \phi_\fdel) = \sum_{\medit \in \scrE(\vec{x})} s_{\base{f}}(\medit, \vec{x}) \quad \text{with} \quad s_{\base{f}}(\medit, \vec{x}) = q(\medit|\vec{x}) \ind{\base{f}(\apply(\vec{x}, \medit)) = y},
    \label{eqn:smooth-score-edit}
\end{gather}
where we define $\ind{A}$ as the indicator function that returns $1$ if the predicate $A$ is true and $0$ otherwise.
We conceal the dependence on $\phi_\fdel$ and write $p_y(\vec{x}; \base{f})$ where evident.

\begin{remark}[On Smoothing vs.\ Certified Edits]
Our smoothing mechanism uses only deletions, while our edit distance certificate (derived in the following sections) 
provides robustness against a broader set of edits, including insertions and substitutions. 
This distinction is valid because the primary role of a smoothing mechanism is to ensure that nearby sequences under 
the certificate's distance measure produce statistically similar distributions of perturbed outputs. 
Since an edit distance ball contains sequences of varying lengths, the smoothing mechanism must be able to change 
the input's dimensionality. 
Deletion is a simple and effective operator for this purpose; by randomly removing tokens, it maps different source 
sequences to a common, lower-dimensional subsequence. 
While a mechanism incorporating insertions or substitutions could also be designed, deletion alone is sufficient to 
create the statistical overlap necessary for certification.
\end{remark}

\subsection{Analysis for Variable-Rate Deletion} \label{sec:deletion-analysis}

We now turn our attention to deriving a certificate for randomized smoothing with variable-rate deletion, where
we make no assumption about the functional form of the deletion rate $\fdel$.
Our key results are lower and upper bounds on the smoothed class probability $p_y(\pert{\vec{x}}; \base{f})$ at a
neighboring input $\pert{\vec{x}}$, that depend solely on the smoothed class probability at the original input
$\vec{x}$ and the size of the longest common subsequence between $\vec{x}$ and $\pert{\vec{x}}$.
These bounds can be used to perform brute force certification, or to derive more certificates under simplifying
assumptions on the functional form of the deletion rate $\fdel$ (discussed in the next section).

The first step in our analysis is to identify a correspondence between deletions for neighboring inputs.
Let
\begin{align*}
    \sqsubseteq \: = \{ \,
    (\medit, \medit')  \in \scrE(\vec{x}) \times \scrE(\vec{x}) :
    \forall i, \mdel_i \geq \mdel_i'
    \, \}.
\end{align*}
be a partial order on the space of deletion indicators $\scrE(\vec{x})$.
We can then write $\medit \sqsubseteq \medit'$ if $\medit'$ can be obtained from $\medit$ by
deleting additional tokens.

This allows us to define a set of deletions building on $\medit$:
\begin{equation*}
    \scrE(\vec{x}, \medit) \coloneq \{\, \medit' \in \scrE(\vec{x}) : \medit \sqsubseteq \medit' \,\},
\end{equation*}
which represents all possible deletions that extend the base deletion $\medit$ under the space of deletion indicators $\scrE$.
We further define the subset of deletions that result in subsequences of a specific length:
\begin{equation*}
    \scrE^k(\vec{x}, \vec{\mdel^\star}) \coloneq \{\medit \in \scrE(\vec{x}, \vec{\mdel^\star}) \mid \abs{\medit} = k\},
\end{equation*}
which represents the set of all equivalent edits achieving the same length $k$ by summing the deletion indicators.

The following result adapted from \citep[Lemma~4]{huang2023rsdel} identifies a relationship
between terms in Equation~\ref{eqn:smooth-score-edit} for a pair of inputs $\vec{x}, \pert{\vec{x}}$ based on their longest common subsequence.

\begin{lemma}[note={\citealp{huang2023rsdel}}, store=lem:equiv-edit-del, label=lem:equiv-edit-del]
    Let $\vec{z}^\star$ be a longest common subsequence~\citep{wagner1974string} of $\pert{\vec{x}}$ and
    $\vec{x}$, and let
    $\pert{\medit}^\star \in \scrE(\pert{\vec{x}})$ and $\medit^\star \in \scrE(\vec{x})$ be any deletions such that
    $\apply(\pert{\vec{x}}, \pert{\medit}^\star) = \apply(\vec{x}, \medit^\star) = \vec{z}^\star$.
    Then there exists a bijection $m\colon \scrE(\pert{\vec{x}}, \pert{\medit}^\star) \to \scrE(\vec{x}, \medit^\star)$
    such that $\apply(\pert{\vec{x}}, \pert{\medit}) = \apply(\vec{x}, \medit)$ for any
    $\pert{\medit} \sqsupseteq \pert{\medit}^\star$ and $\medit = m(\pert{\medit})$.
    Furthermore, we have
    \begin{align*}
        s_{\base{f}}(\pert{\medit}, \pert{\vec{x}}) & = \frac{\pert{\fdel}^{\abs{\pert{\vec{x}}}} }{\fdel^{\abs{\vec{x}}}} \rho(\fdel, \pert{\fdel})^{\abs{\medit}} s_{\base{f}}(\medit, \vec{x}),
    \end{align*}
    where we define $\fdel \coloneqq \fdel(\vec{x})$, $\pert{\fdel} \coloneqq \fdel(\pert{\vec{x}})$ and
    $\rho(\fdel, \pert{\fdel}) = \frac{\fdel (1 - \pert{\fdel})}{\pert{\fdel} (1 - \fdel)}$ for conciseness.
\end{lemma}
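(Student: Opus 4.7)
The plan is to construct the bijection explicitly by tracking which positions of the longest common subsequence $\vec{z}^\star$ are retained, then do a direct PMF ratio computation.

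First I would observe that any $\pert{\medit} \sqsupseteq \pert{\medit}^\star$ agrees with $\pert{\medit}^\star$ on the $|\pert{\vec{x}}| - |\vec{z}^\star|$ positions where $\pert{\mdel}^\star_i = 0$ (these are already deleted and remain deleted), so $\pert{\medit}$ is fully determined by which subset $S \subseteq \{1,\ldots,|\vec{z}^\star|\}$ of the LCS positions (i.e., of the positions where $\pert{\mdel}^\star_i = 1$) it retains. This identifies $\scrE(\pert{\vec{x}},\pert{\medit}^\star)$ with $\powerset{\{1,\ldots,|\vec{z}^\star|\}}$, and by the same argument $\scrE(\vec{x}, \medit^\star)$ is also identified with $\powerset{\{1,\ldots,|\vec{z}^\star|\}}$. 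I would define $m$ as the composition of these two identifications, so $m(\pert{\medit})$ is the unique extension of $\medit^\star$ that retains exactly the same subset $S$ of LCS positions. Bijectivity is then immediate since both sets have cardinality $2^{|\vec{z}^\star|}$.

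Next I would verify $\apply(\pert{\vec{x}}, \pert{\medit}) = \apply(\vec{x}, m(\pert{\medit}))$. On both sides, applying the base deletions $\pert{\medit}^\star$ and $\medit^\star$ already yields the same sequence $\vec{z}^\star$, and the further deletions retained by $\pert{\medit}$ and $m(\pert{\medit})$ correspond to the same subset $S$ of positions within $\vec{z}^\star$ by construction. Thus both applications produce the subsequence of $\vec{z}^\star$ indexed by $S$. A direct corollary is that $|\pert{\medit}| = |m(\pert{\medit})| = |S|$, since both counts equal the number of retained LCS positions (no non-LCS positions survive $\pert{\medit}^\star$ or $\medit^\star$).

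Finally I would compute the ratio. Writing $k = |\medit| = |\pert{\medit}|$ and using the i.i.d.\ Bernoulli PMF,
\begin{equation*}
\frac{q(\pert{\medit}\mid\pert{\vec{x}})}{q(\medit\mid\vec{x})} = \frac{\pert{\fdel}^{|\pert{\vec{x}}|-k}(1-\pert{\fdel})^k}{\fdel^{|\vec{x}|-k}(1-\fdel)^k} = \frac{\pert{\fdel}^{|\pert{\vec{x}}|}}{\fdel^{|\vec{x}|}} \left(\frac{\fdel(1-\pert{\fdel})}{\pert{\fdel}(1-\fdel)}\right)^{k} = \frac{\pert{\fdel}^{|\pert{\vec{x}}|}}{\fdel^{|\vec{x}|}}\,\rho(\fdel,\pert{\fdel})^{|\medit|}.
\end{equation*}
Since the indicator factors in $s_{\base{f}}$ agree (because $\apply(\pert{\vec{x}},\pert{\medit}) = \apply(\vec{x},\medit)$), multiplying through by $s_{\base{f}}(\medit,\vec{x})$ gives the claimed identity.

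The main obstacle is keeping the bookkeeping honest: the two deletion spaces $\scrE(\vec{x})$ and $\scrE(\pert{\vec{x}})$ live in different ambient dimensions ($|\vec{x}|$ vs.\ $|\pert{\vec{x}}|$), so the bijection cannot act coordinatewise on indicators. The right invariant is the abstract subset $S$ of LCS positions, which is why routing through the identification with $\powerset{\{1,\ldots,|\vec{z}^\star|\}}$ is essential. Because this structural part is already established by \citet{huang2023rsdel}, the variable-rate generalization only adds the asymmetric $\fdel \neq \pert{\fdel}$ algebra above, which is routine.
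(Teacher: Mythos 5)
Your proof is correct and takes the natural approach. The paper itself does not prove this lemma but instead defers the bijection construction and $\apply$-equality to Lemma~4 of \citet{huang2023rsdel}; your construction via the identification of both $\scrE(\pert{\vec{x}},\pert{\medit}^\star)$ and $\scrE(\vec{x},\medit^\star)$ with $\powerset{\{1,\ldots,|\vec{z}^\star|\}}$ is exactly the mechanism that reference uses, and the observation that no non-LCS position survives the base deletion is the key invariant making $|\medit| = |\pert{\medit}|$ hold. The only part genuinely new to the variable-rate setting is the PMF ratio, and your algebra
\[
\frac{q(\pert{\medit}\mid\pert{\vec{x}})}{q(\medit\mid\vec{x})}
= \frac{\pert{\fdel}^{|\pert{\vec{x}}|}}{\fdel^{|\vec{x}|}}\,\rho(\fdel,\pert{\fdel})^{|\medit|}
\]
checks out directly from the i.i.d.\ Bernoulli PMF. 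Since the indicator factors agree via the $\apply$-equality, the claimed identity for $s_{\base{f}}$ follows, so the proposal is complete and consistent with the argument the paper points to.
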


We can decompose the sum over edits in Equation~\ref{eqn:smooth-score-edit} into two parts: a sum over edits
in the set $\scrE(\vec{x}, \medit^\star)$ building on $\medit^\star$ and a sum over the remaining edits not in this
set.
After invoking Lemma~\ref{lem:equiv-edit-del}, we obtain the following relation between the smoothed class
probability at $\pert{\vec{x}}$ and $\vec{x}$:
\begin{align}
    p_y(\pert{\vec{x}}; \base{f})
     & = \frac{\pert{\fdel}^{\abs{\pert{\vec{x}}}} }{\fdel^{\abs{\vec{x}}}}
    \sum_{\medit \in \scrE(\vec{x}, \medit^\star)}
    \rho(\fdel, \pert{\fdel})^{\abs{\medit}} s_{\base{f}}(\medit, \vec{x}) \quad + \sum_{\pert{\medit} \in \scrE(\pert{\vec{x}}) \setminus \scrE(\pert{\vec{x}}, \pert{\medit}^\star)} s_{\base{f}}(\pert{\medit}, \pert{\vec{x}}).
    \label{eqn:relate-score-pair}
\end{align}

From here, we obtain a lower bound on the right-hand side of \eqref{eqn:relate-score-pair} by dropping the 
sum over $\scrE(\pert{\vec{x}}) \setminus \scrE(\pert{\vec{x}}, \pert{\medit}^\star)$ and deriving a lower bound 
on the sum over $\scrE(\vec{x}, \medit^\star)$ that is the solution to a bounded knapsack problem. 
Later, we use this pairwise lower bound to obtain an edit distance certificate by considering the worst case
$\pert{\vec{x}}$ within a bounded edit distance from $\vec{x}$.

\begin{lemma}[store=lem:var-del-cert-pairwise-lb, label=lem:var-del-cert-pairwise-lb]
    Let $\vec{x}, \pert{\vec{x}} \in \scrX$ be a pair of inputs with a longest common subsequence (LCS)
    $\vec{z}^\star$ and let $\mu = p_y(\vec{x}; \base{f})$.
    Define
    \begin{align*}
        H^\ast = \begin{cases}
                     \min_{h: \sum_{i = 0}^{h} \revbinom{\abs{\vec{z}^\star}}{\fdel}{i}
                         \geq \mu - 1 + \fdel^{\abs{\vec{x}} - \abs{\vec{z}^\star}}} h,
                      & \fdel \geq \pert{\fdel}, \\
                     \max_{h: \sum_{i = h}^{\abs{\vec{z}^\star}} \revbinom{\abs{\vec{z}^\star}}{\fdel}{i}
                     \geq \mu - 1 + \fdel^{\abs{\vec{x}} - \abs{\vec{z}^\star}}} h,
                      & \fdel < \pert{\fdel},    \\
                 \end{cases}
    \end{align*}
    as a threshold on the number of tokens retained when editing $\vec{x}$, where
    $\revbinom{n}{p}{k} \coloneq \binom{n}{k} (1 - p)^k p^{n - k}$ is the Binomial pmf for $n$ trials with
    success probability $1 - p$.
    Then there exists a lower bound
    $\lb(\mu, \vec{x}, \pert{\vec{x}}, \fdel) \leq p_y(\pert{\vec{x}}; \base{f})$ such that:
    \begin{align*}
        \lb(\mu, \vec{x}, \pert{\vec{x}}, \fdel)
        &= \frac{\pert{\fdel}^{\abs{\pert{\vec{x}}} - \abs{\vec{z}^\star}} }{\fdel^{\abs{\vec{x}} - \abs{\vec{z}^\star}}}
        \Bigg( \sum_{i=\bm{l} (H^\ast + 1)}^{(1 - \bm{l})(H^\ast - 1) + \bm{l} \abs{\vec{z}^\star}}
        \revbinom{\abs{\vec{z}^\star}}{\pert{\fdel}}{i}                     \\
        & \qquad {} + \revbinom{\abs{\vec{z}^\star}}{\pert{\fdel}}{H^\ast}
        \floor*{\frac{c(\mu, \abs{\vec{x}}, \abs{\vec{z}^\star}, \fdel, H^\ast)}{
                \revbinom{\abs{\vec{z}^\star}}{ \fdel}{H^\ast}
            }}_{ \binom{\abs{\vec{z}^\star}}{H^\ast}^{-1} } \Bigg),
    \end{align*}
    where
    \begin{align*}
         & c(\mu, \abs{\vec{x}}, \abs{\vec{z}^\star}, \fdel, H^\ast) =
        \mu - 1 + \fdel^{\abs{\vec{x}} - \abs{\vec{z}^\star}} - \sum_{i=\bm{l} (H^\ast + 1)}^{(1 - \bm{l})(H^\ast - 1) + \bm{l} \abs{\vec{z}^\star}}
        \revbinom{\abs{\vec{z}^\star}}{ \fdel}{j},
    \end{align*}
    $\bm{l} = \ind{\fdel < \pert{\fdel}}$ is a binary indicator and
    $\floor{\cdot}_{v} \coloneq \floor{\frac{\cdot}{v}} v$
    is a gridded flooring operation.
\end{lemma}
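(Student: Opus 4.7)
The approach is to recognize that (\ref{eqn:relate-score-pair}) already contains an exact identity for $p_y(\pert{\vec{x}}; \base{f})$, and to derive the claimed lower bound by (i) dropping the non-negative second sum, (ii) regrouping the surviving terms by retained-token count, and (iii) minimizing the resulting expression over all base classifiers consistent with $\mu = p_y(\vec{x}; \base{f})$. That minimization will reduce to a bounded integer knapsack whose closed-form solution is precisely the formula of the lemma.

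\textbf{Grouping and rescaling.} Every $\medit \in \scrE^k(\vec{x}, \medit^\star)$ shares the same Bernoulli weight $\fdel^{\abs{\vec{x}}-k}(1-\fdel)^k$, and $\abs{\scrE^k(\vec{x}, \medit^\star)} = \binom{\abs{\vec{z}^\star}}{k}$. Absorbing the $\rho(\fdel, \pert{\fdel})^{\abs{\medit}}$ factor produced by Lemma~\ref{lem:equiv-edit-del} and pulling the common exponentials of $\pert{\fdel}$ and $\fdel$ outside, the surviving first sum of (\ref{eqn:relate-score-pair}) collapses into a linear functional of the integer counts $t_k \in \{0, \ldots, \binom{\abs{\vec{z}^\star}}{k}\}$ that tally the length-$k$ subsequences of $\vec{z}^\star$ on which $\base{f}$ predicts $y$. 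Normalizing $u_k = t_k / \binom{\abs{\vec{z}^\star}}{k}$ then produces the $\revbinom{\abs{\vec{z}^\star}}{\pert{\fdel}}{k}$ coefficients in the bound.

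\textbf{Constraint from $\mu$.} Since $\scrE(\vec{x}, \medit^\star)$ carries total probability $\fdel^{\abs{\vec{x}}-\abs{\vec{z}^\star}}$ under $q(\cdot \mid \vec{x})$ (any $\medit \sqsupseteq \medit^\star$ must delete every position outside the support of $\medit^\star$), the portion of $\mu$ attributable to edits outside $\scrE(\vec{x}, \medit^\star)$ is at most $1 - \fdel^{\abs{\vec{x}}-\abs{\vec{z}^\star}}$. Therefore $\sum_k \fdel^{\abs{\vec{x}}-k}(1-\fdel)^k\, t_k \geq \mu - 1 + \fdel^{\abs{\vec{x}}-\abs{\vec{z}^\star}}$, which after the same rescaling gives a single linear lower-bound constraint on $\sum_k \revbinom{\abs{\vec{z}^\star}}{\fdel}{k}\, u_k$.

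\textbf{Monotone knapsack and $H^\ast$.} The remaining problem is: minimize $\sum_k \revbinom{\abs{\vec{z}^\star}}{\pert{\fdel}}{k}\, u_k$ subject to the preceding constraint and the discrete box $u_k \binom{\abs{\vec{z}^\star}}{k} \in \{0, 1, \ldots, \binom{\abs{\vec{z}^\star}}{k}\}$. The per-slot ratio of objective to constraint coefficients, $\revbinom{\abs{\vec{z}^\star}}{\pert{\fdel}}{k}/\revbinom{\abs{\vec{z}^\star}}{\fdel}{k}$, equals a $k$-independent constant times $\rho(\fdel, \pert{\fdel})^k$, hence is monotone in $k$ with direction determined by the sign of $\fdel - \pert{\fdel}$. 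The minimizer is greedy: saturate slots in order of ascending ratio---ascending $k$ when $\fdel \geq \pert{\fdel}$ (so $\bm{l} = 0$) and descending $k$ when $\fdel < \pert{\fdel}$ (so $\bm{l} = 1$). The breakpoint index $H^\ast$ is the first slot at which the running cumulative constraint weight meets the target, matching its two-branch definition. On the boundary slot, $t_{H^\ast}$ must be integer-valued, so the residual demand $c(\mu, \abs{\vec{x}}, \abs{\vec{z}^\star}, \fdel, H^\ast)$ is absorbed by the largest feasible multiple of $1/\binom{\abs{\vec{z}^\star}}{H^\ast}$ that still lower-bounds the LP value, producing the gridded $\floor{\cdot}_{\binom{\abs{\vec{z}^\star}}{H^\ast}^{-1}}$ expression.

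\textbf{Main obstacle.} The monotone-ratio greedy principle and the sign split via $\bm{l}$ are routine; the delicate step is the integrality at the boundary slot, where $t_{H^\ast}$ sits on the discrete grid and the LP-optimal fractional allocation is not directly feasible---the proof must justify the gridded-floor rounding as giving a valid lower bound. A related bookkeeping hazard is threading the prefactors $\fdel^{\abs{\vec{x}}-\abs{\vec{z}^\star}}$, $\pert{\fdel}^{\abs{\pert{\vec{x}}}-\abs{\vec{z}^\star}}$, and $\rho(\fdel, \pert{\fdel})$ through the change of variables so that the closed form collapses exactly to the expression stated in the lemma.
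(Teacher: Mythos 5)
Your proposal follows essentially the same strategy as the paper: drop the non-negative second sum in Equation~\ref{eqn:relate-score-pair}, group the surviving terms by retained-token count, minimize over base classifiers consistent with $\mu$ via a bounded-knapsack relaxation, and exploit monotonicity of the per-slot ratio $\rho(\fdel,\pert{\fdel})^i$ to take the greedy fill with a floored boundary slot (with the sign split handled by $\bm{l}$). The one step you flag as a ``delicate obstacle''---that the floor-rounded greedy allocation, which need not satisfy the knapsack constraint, still lower-bounds the constrained integer minimum---the paper settles with a direct exchange argument on the sign pattern of $\Delta(i)=w(i)(h^\ast(i)-h'(i))$ combined with monotonicity of $\rho^i$, though the LP-relaxation reasoning you gesture at (floor value $\leq$ LP optimum $\leq$ integer optimum) would also close it once the greedy fractional solution is verified to be LP-optimal.
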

\begin{proof}[Proof sketch]
    We consider the case where $\fdel \geq \pert{\fdel}$, as the opposite case follows similar logic.
    Dropping the last term in Equation~\ref{eqn:relate-score-pair}, we obtain the bound:
    \begin{align*}
        p_y(\pert{\vec{x}}; \base{f}) \geq \frac{\pert{\fdel}^{\abs{\pert{\vec{x}}}} }{\fdel^{\abs{\vec{x}}}}
        \sum_{\medit \in \scrE(\vec{x}, \medit^\star)} \rho(\fdel, \pert{\fdel})^{\abs{\medit}}
        s_{\base{f}}(\medit, \vec{x}).
    \end{align*}
    Since all factors are non-negative, we focus on lower bounding the sum over $\scrE(\vec{x}, \medit^\star)$,
    which we denote by $\sigma(\pert{\vec{x}}; \base{f})$.
    We lower bound $\sigma(\pert{\vec{x}}; \base{f})$ by replacing the base model $\base{f}$ by a worst-case plausible model
    \begin{align*}
        \sigma(\pert{\vec{x}}; \base{f}) \geq
        \frac{\pert{\fdel}^{\abs{\pert{\vec{x}}}} }{\fdel^{\abs{\vec{x}}}}
        \min_{\pert{f} \in \mathcal{F}}
        \sum_{\medit \in \scrE(\vec{x}, \medit^\star)} \rho(\fdel, \pert{\fdel})^{\abs{\medit}}
        s_{\pert{f}}(\medit, \vec{x})
    \end{align*}
    where the set of plausible models satisfying the constraint in Lemma~\ref{lem:bijection-bound} is
    \begin{align*}
        \mathcal{F} = \left\{\pert{f} \mathrel{\Big|} \sum_{\medit \in \scrE(\vec{x}, \vec{\mdel^\star})}
        s_{\pert{f}}(\medit, \vec{x}) \geq \mu - 1 + \fdel^{\abs{\vec{x}} - \abs{\vec{z^\star}}} = W\right\}.
    \end{align*}
    The minimization problem can now be viewed as a bounded knapsack problem \citep{csirik1991heuristics},
    where each $\medit$ is treated as an item of size $i = \abs{\medit}$,
    with weight $w(i)$ and value $v(i)$.
    So for size $i$ at most $\binom{N}{i}$ items are available.
    We choose multiplicities $h(i)\le\binom{N}{i}$ to minimize $\sum_{i=0}^N v(i)h(i)$
    subject to $\sum_{i=0}^N w(i)h(i)\ge W$.
    Since the unit-value ratio $v(i)/w(i)=\rho(\fdel,\pert{\fdel})^i$ increases in $i$,
    the greedy solution orders by $i$
    filling up to a critical $H^*$ where the cumulative weight would meet $W$,
    then takes as many items of size $H^*$ as possible without meeting the weight requirement.
    This yields a valid, though not necessarily tight, lower bound.
\end{proof}

By a similar argument, we also obtain an upper bound on $p_y(\pert{\vec{x}}; \base{f})$ in
Lemma~\ref{lem:var-del-cert-pairwise-ub} (deferred to Appendix~\ref{app-sec:theory} due to space constraints)
that is the solution to a maximization problem viewed as a bounded knapsack problem \citep{george1957discrete}.
The upper bound is not needed when computing a certificate in the style of \citet{cohen2019certified}.
However, it is needed to compute certificates in the style of \citet{lecuyer2019certified} or for
certification of top-k predictions~\citep{jia2020certified}.

We note that the bounds in Lemmas~\ref{lem:var-del-cert-pairwise-lb} and~\ref{lem:var-del-cert-pairwise-ub} can
be used to compute an edit distance certificate by exhaustive search for arbitrary deletion functions
(see Appendix~\ref{app-sec:brute-force-certify}) with a constant factor improvement over querying the smoothed
classifier directly.
By making further assumptions on the deletion function, we are able to compute certificates more efficiently, as
demonstrated in the next section.

\subsection{Certification for Length-Dependent Deletion} \label{sec:deletion-specialization}

We now turn our attention to the class of deletion functions $\fdel(\vec{x})$ that depend on the input only via the
length $\abs{\vec{x}}$.
To emphasize this restricted form of dependence on $\vec{x}$, we write $\fdel(\vec{x}) = \fdel(\abs{\vec{x}})$ in
this section.
For $\fdel$ of this form, the lower and upper bounds on $p_y(\pert{\vec{x}}, \base{f})$ in
Lemmas~\ref{lem:var-del-cert-pairwise-lb} and~\ref{lem:var-del-cert-pairwise-ub} depend only on $p_y(\vec{x}, \base{f})$,
the sequence lengths $\abs{\vec{x}}$ and $\abs{\pert{\vec{x}}}$, and the length of the LCS $\abs{\vec{z}^\star}$.
We can therefore enumerate over all neighboring sequences $\pert{\vec{x}}$ within edit distance $\radius$ of
$\vec{x}$ by enumerating over the number of edits $n_e$ of each type in $e \in \opset$, such that the total number of
edits $n_\del + n_\ins + n_\sub$ does not exceed $\radius$.
The set containing the possible number of edits of each type at radius $\radius$ is defined as follows:
\begin{align*}
    \mathcal{C}(\opset, \radius) & = \Big\{
    (n_\del, n_\ins, n_\sub) \in \naturals \cup \{0\} :                                                                  \\
                                 & \qquad n_\del + n_\ins + n_\del = \radius, n_\del \leq \radius \ind{\del \in \opset},
    n_\ins \leq \radius \ind{\ins \in \opset}, n_\sub \leq \radius \ind{\sub \in \opset}
    \Big\}.
\end{align*}

\begin{wrapfigure}[18]{r}{0.47\textwidth}
    \begin{minipage}{\linewidth}
        \vspace{-7mm}
        \begin{algorithm}[H]
            \caption{\textsc{Certify}}
            \label{alg:certify-length-dep}
            \begin{algorithmic}[1]
                \REQUIRE
                base classifier $\base{f}$,
                input sequence $\vec{x}$,
                predicted class $y_1$,
                length-dependent deletion probability $\fdel$,
                allowed edit operations $\opset$,
                significance level $\alpha$
                \ENSURE maximum radius that can be certified
                \STATE $t_1^\lb \gets \hat{p}^\lb_{y_1}(\vec{x}; \base{f}, \phi_{\fdel}, \alpha)$ \label{alg-line:low-conf-1}
                \STATE $t_2^\ub \gets \max_{y \neq y_1 } \hat{p}^\ub_y(\vec{x}; \base{f}, \phi_{\fdel}, \alpha)$ \label{alg-line:upp-conf-2}
                \STATE \textbf{for} $r=0$ \textbf{to} $\infty$ \textbf{do} \label{alg-line:loop-radii}
                \STATE \quad \textbf{for all} $(n_\del,n_\ins,n_\sub)\in\mathcal C(\opset,r)$ \textbf{do} \label{alg-line:loop-counts}
                \STATE \quad\quad $\lvert\pert{\vec{x}}\rvert
                    \gets \lvert\vec{x}\rvert + n_\ins - n_\del$
                \STATE \quad\quad $\pert t_1^\lb \gets \lb\bigl(t_1^\lb,\vec x,\pert{\vec{x}},\fdel\bigr)$ \label{alg-line:est-lb}
                \STATE \quad\quad $\pert t_2^\ub \gets \ub\bigl(t_2^\ub,\vec x,\pert{\vec{x}},\fdel\bigr)$ \label{alg-line:est-ub}
                \STATE \quad\quad \textbf{if} $\pert t_1^\lb \le \pert t_2^\ub$ \textbf{then}
                \STATE \quad\quad\quad \textbf{return} $r$
            \end{algorithmic}
        \end{algorithm}
    \end{minipage}
\end{wrapfigure}
We can now establish Algorithm~\ref{alg:certify-length-dep} and Theorem~\ref{thm:certify-length-dep} that
certify the top prediction of a smoothed classifier with length-dependent deletion function $\fdel$ using these insights.
It follows the standard Monte Carlo estimation approach from prior work~\citep{cohen2019certified,lecuyer2019certified}.
Lines~\ref{alg-line:low-conf-1} and~\ref{alg-line:upp-conf-2} estimate a lower confidence bound for the predicted
class $y_1$ and an upper confidence bound for the runner up class $y_2$ following the approach of \citet{lecuyer2019certified}
with a designated $\alpha$ confidence bound.
We use $\hat{p}_{y}(\vec{x}; \base{f}, \phi_{\fdel}, \alpha)$ with superscript $\lb$ or $\ub$ to denote the Clopper-Pearson
lower and upper confidence bound for the smoothed class probability $p_y$.
This involves a Bonferroni correction (dividing $\alpha$ by 2) since a one-sided confidence
bound is estimated for two-classes using the same sample drawn from $\phi_\fdel(\vec{x})$.
We circumvent the problem of needing to divide the confidence level by the number of classes
by estimating the top and runner-up using an independent set of samples and output radius of $0$
if there is disagreement between the two batches of samples.
The for loop in Line~\ref{alg-line:loop-radii} then attempts to verify a certificate at increasing radii,
by checking all neighboring inputs parameterized by the number of edits of each type (Line~\ref{alg-line:loop-counts}).
If a violation is found---where the classifier's prediction changes due to the runner-up probability
exceeding the probability for $y_1$ within the $\alpha$ confidence bound---the previous best radius is returned.
The proof of Theorem~\ref{thm:certify-length-dep} is deferred to Appendix~\ref{app-sec:theory}.

\begin{theorem}[store=thm:certify-length-dep, label=thm:certify-length-dep]
    Given a smoothed classifier $f$ as described in Algorithm~\ref{alg:certify} with a length-dependent deletion function $\fdel$,
    $$
        \forall \pert{\vec{x}} \in B_{\radius}(\vec{x}; \opset): f(\vec{x}) = f(\pert{\vec{x}})
    $$
    with confidence level at least $1 - \alpha$.
\end{theorem}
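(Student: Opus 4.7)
The plan is to split the certificate into a probabilistic statement about the Monte Carlo confidence bounds computed in Lines~\ref{alg-line:low-conf-1}--\ref{alg-line:upp-conf-2}, and a deterministic worst-case argument that, conditional on those bounds being valid, the radius $\radius$ the algorithm returns is a genuine certificate for every $\pert{\vec x}\in B_\radius(\vec x;\opset)$. First I would fix the good event $\mathcal{G} = \{t_1^\lb \le p_{y_1}(\vec x;\base{f})\} \cap \{t_2^\ub \ge \max_{y\neq y_1} p_y(\vec x;\base{f})\}$. Since the top and runner-up labels are selected on an independent sample batch (with the fallback of returning radius $0$ on disagreement trivially giving a valid certificate), we only pay a Bonferroni correction of $\alpha/2$ per side on the shared estimation batch, so exactness of the Clopper--Pearson intervals gives $\Pr[\mathcal{G}] \ge 1 - \alpha$.

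Next, conditional on $\mathcal{G}$, I would argue that if the algorithm returns $\radius$, then prediction invariance holds everywhere in $B_\radius(\vec x;\opset)$. Because $\fdel$ is length-dependent, Lemmas~\ref{lem:var-del-cert-pairwise-lb} and~\ref{lem:var-del-cert-pairwise-ub} depend on $\pert{\vec x}$ only through $\abs{\pert{\vec x}}$ and $\abs{\vec z^\star}$; any $\pert{\vec x}$ at edit distance $r'$ from $\vec x$ realized by a profile $(n_\del,n_\ins,n_\sub)$ satisfies $\abs{\pert{\vec x}}=\abs{\vec x}+n_\ins-n_\del$ and $\abs{\vec z^\star}\ge\abs{\vec x}-n_\del-n_\sub$, with the total number of edits equal to $r'$. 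Hence the profiles enumerated in $\bigcup_{r'\le\radius}\mathcal{C}(\opset,r')$ (visited by the outer loop in Line~\ref{alg-line:loop-radii}) cover every point of $B_\radius(\vec x;\opset)$. I would then verify that $\lb$ is non-decreasing in $\mu$ and non-decreasing in $\abs{\vec z^\star}$, and symmetrically that $\ub$ is non-decreasing in $\mu$ and non-increasing in $\abs{\vec z^\star}$, so that substituting $t_1^\lb$ (resp.\ $t_2^\ub$) for $\mu$ and the worst-case LCS length $\abs{\vec x}-n_\del-n_\sub$ for $\abs{\vec z^\star}$ yields pessimistic bounds $\pert t_1^\lb \le p_{y_1}(\pert{\vec x};\base{f})$ and $\pert t_2^\ub \ge \max_{y\neq y_1}p_y(\pert{\vec x};\base{f})$. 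The algorithm's exit criterion $\pert t_1^\lb > \pert t_2^\ub$ at every profile up to the returned radius then gives $f(\pert{\vec x})=y_1=f(\vec x)$.

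The main obstacle I anticipate is monotonicity of $\lb$ and $\ub$ in $\abs{\vec z^\star}$: Lemma~\ref{lem:var-del-cert-pairwise-lb} combines a Binomial-tail threshold $H^\ast$, a gridded flooring operation, and a leading factor $\pert{\fdel}^{\abs{\pert{\vec x}}-\abs{\vec z^\star}}/\fdel^{\abs{\vec x}-\abs{\vec z^\star}}$, so the effect of shrinking $\abs{\vec z^\star}$ by one is not transparent from the closed form. The intuition---a smaller common subsequence means smaller statistical overlap, so both pairwise bounds must spread outward---can be made rigorous either by a coupling between Binomials of length $\abs{\vec z^\star}$ and $\abs{\vec z^\star}-1$, or by retracing the bounded-knapsack derivation given in the sketch of Lemma~\ref{lem:var-del-cert-pairwise-lb} and observing that its optimum value is monotone in the per-size item counts $\binom{\abs{\vec z^\star}}{i}$. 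The remaining pieces---completeness of $\mathcal{C}(\opset,r)$, returning the last radius at which every profile passed, and the union bound wrapping up the confidence statement---are then routine bookkeeping.
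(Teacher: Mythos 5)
Your proposal follows essentially the same route as the paper's proof: split off the probabilistic Monte Carlo guarantee (Clopper--Pearson with a Bonferroni correction and the independent label-selection batch), then argue deterministically that, on the good event, the returned radius is a genuine certificate by chaining the pairwise bounds of Lemmas~\ref{lem:var-del-cert-pairwise-lb} and~\ref{lem:var-del-cert-pairwise-ub} over all edit profiles. The difference is one of care rather than strategy: the paper's proof writes a terse chain $t_1^\lb \geq \pert{t}_1^\lb \geq \pert{t}_2^\ub \geq t_2^\ub$ and concludes about $p_{y_1}(\vec{x})$ versus $p_{y_2}(\vec{x})$, where what is actually needed is the chain $p_{y_1}(\pert{\vec{x}};\base{f}) \geq \pert{t}_1^\lb > \pert{t}_2^\ub \geq p_{y_2}(\pert{\vec{x}};\base{f})$ for the perturbed input. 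You correctly spell out the two monotonicity facts this chain silently uses and which the paper does not establish: (i) $\lb$ must be non-decreasing and $\ub$ non-increasing in the estimated probability $\mu$, since the algorithm substitutes the confidence bound $t_1^\lb$ (resp.\ $t_2^\ub$) in place of the true smoothed probability; and (ii) monotonicity in $\abs{\vec{z}^\star}$, since the enumeration over $(n_\del,n_\ins,n_\sub)$ only implies a \emph{lower bound} $\abs{\vec{z}^\star}\ge\abs{\vec{x}}-n_\del-n_\sub$ on the true LCS length, so the bound computed at the worst-case profile must dominate the one at the true LCS. Your suggestion to argue (ii) via the bounded-knapsack derivation (monotonicity of the optimum in the per-size multiplicities $\binom{\abs{\vec{z}^\star}}{i}$) is the right lever and fills a step that the paper leaves implicit. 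In short: same decomposition, same key lemmas, but you make explicit two genuine proof obligations that the paper elides.
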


We now consider various options for specifying a length-dependent deletion function $\fdel$.

\paragraph{Fixed-Rate Deletion \citep{huang2023rsdel}}
The most basic option is a fixed-rate deletion mechanism,
\begin{equation}
    \fdel(\vec{x}) \coloneq p_\del, \label{eqn:fdel-fixed-rate}
\end{equation}
for some constant $p_\del \in [0, 1]$.
For this specialization, we show in Appendix~\ref{app-sec:theory} that the Lemma~\ref{lem:var-del-cert-pairwise-lb}
simplifies approximately to the closed form analytical solutions obtained by \citet{huang2024cert}.
Although this mechanism was shown to be effective for malware detection and some natural language tasks,
it is less suited when there is variation in sequence\slash text lengths.
In particular, one can optimize the mechanism by setting $p_\del$ for best average length input performance,
but this may be suboptimal for inputs that deviate from the average length.

\paragraph{Length Dependent Deletion}
We propose a mechanism, \vardel, where the deletion rate $\fdel$ smoothly increases as a function of the input sequence
length.
Specifically, we define %
\begin{equation}
    \fdel(\vec{x}) \coloneq \max\left(p_\lb, p \left(1 - \frac{k}{\abs{\vec{x}}}\right)\right), \label{eqn:fdel-vardel}
\end{equation}
where $p_\lb \in [0, 1]$ is a minimum deletion rate, $p \in [p_\lb, 1]$ is the asymptotic deletion rate
(as $\abs{\vec{x}} \to \infty$), and $k > 0$ is a parameter that scales the deletion rate based on the input length.
In subsequent experiments, we set $p_\lb = p_\del$, $p = 1$ and $k = \floor{(1 - p_\lb) \E\{\abs{\vec{x}}\}}$ so that
the deletion rate in Equation~\ref{eqn:fdel-vardel} matches the deletion rate in Equation~\ref{eqn:fdel-fixed-rate}
for a test sequence of average length.
This allows for a fair comparison between fixed-rate deletion and \vardel.
We note that clipping of the deletion rate below $p_\lb$ is done to avoid a certified radius of zero for short inputs.
While this may sacrifice some accuracy,
it enables non-vacuous certificates for shorter sequences.
One intuition is that longer input sequences are inherently more tolerant to noise,
allowing for a more significant deletion rate to be applied without severely impacting model performance.

\paragraph{Optimized Length Dependent Deletion}
We also consider an enhanced optimization based certification approach, dubbed \optdel,
where we bin the inputs by sizes into $n$ bins and find an optimal expected length after deletion for each bin
using golden section based search (Algorithm~\ref{alg:optimize_expected_length}).
In particular, let $g({\vec{x}})$ denote the index of the bin input $\vec{x}$ belongs to, then we have
$\fdel(\vec{x}) \coloneq 1 - \frac{k_{g({\vec{x}})}}{\abs{\vec{x}}}$
where $k_{g(\vec{x})}$ is the empirically optimized expected length for bin $g(\vec{x})$.
We provide more details and intuition for Algorithm~\ref{alg:optimize_expected_length} in the Appendix~\ref{app-sec:opt-del}.

\section{Experiments} \label{sec:experiments}

To evaluate the effectiveness of \vardel\ and \optdel, we conduct experiments on a diverse set of natural language processing tasks.
These include five-class sentiment analysis on the \yelp\ dataset~\citep{zhang2015character},
spam detection using the
\assassin\ dataset \citep{mason2006spamassassin,chen2022adversarial},
sentiment analysis on the \imdb\ dataset \citep{maas2011learning},
and unreliable news detection using the \lun\ dataset \citep{rashkin2017truth,chen2022adversarial}.
The varying input sizes of these datasets allow for a comprehensive assessment of \vardel's overall effectiveness.
We also analyze performance by dividing inputs into quartiles based on their length.
Detailed specifications of these datasets are provided in Table~\ref{tbl:data-summary} in Appendix~\ref{app-sec:eval-setup}.
The appendices provide further certified results (Appendix~\ref{app-sec:exp-extra}), details 
on computational efficiency (Appendix~\ref{app-sec:eval-efficiency}) and a supplementary 
analysis of empirical robustness against common text attacks (Appendix~\ref{app-sec:attacks}),

\paragraph{Model and Parameter Setup}
We use a pre-trained RoBERTa model~\citep{liu2019roberta} as a non-smoothed baseline (\ns) and as a base model for randomized smoothing. 
We compare our adaptive deletion rate policies \vardel\ and \optdel\ with a fixed deletion rate baseline,
that corresponds to \randel~\citep{huang2024cert}. 
All of these methods support certification of edit distance. 
We note that \randel\ has one parameter $p_\del$, \vardel\ has two parameters $p_\lb$ and $k$.
The parameters for \optdel\ are determined via an empirical calibration procedure on the training set (see 
Appendix~\ref{app-sec:opt-del}).
We additionally include \ranmask~\citep{zeng2023certified} as a baseline that supports a more limited Hamming distance certificate. 
It uses a masking mechanism, where the fraction of masked tokens $p_\mask$ is fixed. 
For all smoothed models, we apply the same base RoBERTa model, training procedure, and parameter settings where possible.
During inference, we use $1000$ samples for prediction and $4000$ samples for certification to ensure stable and reliable results.
We refer the reader to Appendix~\ref{app-sec:eval-setup} for further details, including how \optdel\ is calibrated.

\paragraph{Performance Measures}
We report \emph{clean accuracy}, the accuracy without any robustness guarantee, as a measure of model performance.  
To evaluate robustness, we use the \emph{certified radius} (CR),
which quantifies the largest perturbation radius under which a prediction is provably robust.
A larger CR indicates greater robustness.  
Following \citet{huang2024cert}, we also report \emph{log-cardinality of the certified region} ($\log(\text{CC})$),
the base-10 logarithm of the number of perturbations enclosed by the CR under the given threat model.
For Hamming distance (\ranmask), this is an exact count,
whereas for edit distance (\randel, \vardel, \optdel), it serves as a lower bound, underestimating by at most one order of magnitude.  
For incorrectly classified inputs, we assign $\text{CR}=\text{CC}=0$. 
Finally, we define \emph{certified accuracy} as the proportion of inputs that are both correctly classified and
possess log-cardinalities exceeding a given threshold $c$.

\subsection{Robustness-Accuracy Trade-off} \label{sec:evaluation-certify}

\begin{figure}
    \centering
    \subfigure[\yelp]{
        \includegraphics[width=0.233\linewidth]{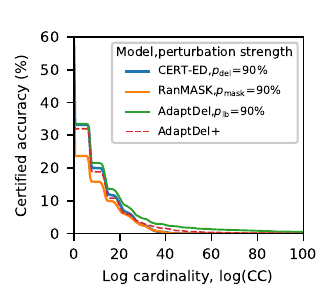}
    }
    \subfigure[\imdb]{
        \includegraphics[width=0.233\linewidth]{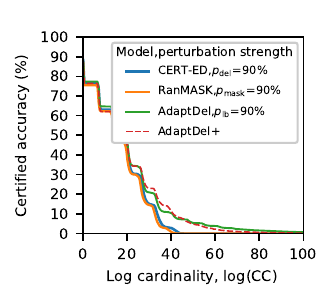}
    }
    \subfigure[\assassin]{
        \includegraphics[width=0.233\linewidth]{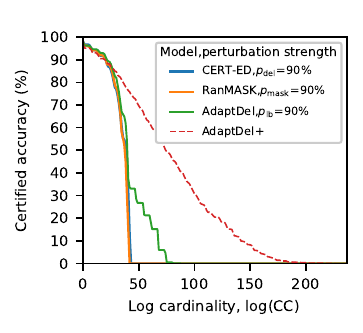}
    }
    \subfigure[\lun]{
        \includegraphics[width=0.233\linewidth]{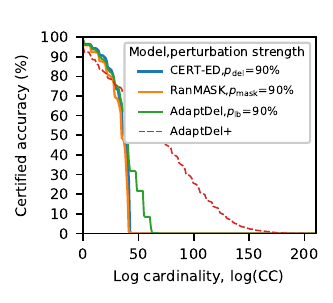}
    }
    \caption{
        Certified accuracy plotted against a lower bound on the log-cardinality of the certified region. 
        Each point $(c, a)$ on a curve indicates that a fraction $a$ of the test inputs were correctly classified 
        with a certified log-cardinality of at least $c$.
        While \vardel\ consistently outperforms the baselines, \optdel\ achieves the highest certified accuracy for 
        larger certified regions.
    }
    \label{fig:certified_accuracy}
\end{figure}

Figure~\ref{fig:certified_accuracy} presents the certified accuracy as a function of the log-cardinality of the certified region for all datasets.
Each curve depicts how the fraction of test instances that remain correct and certifiably robust changes as the size of the certificate grows (in log-scale).
Combined with results in Table~\ref{tbl:cr-statistics-small} in Appendix~\ref{app-sec:other-certified-results},
both \vardel\ and \optdel\ maintain competitive clean accuracy ($0$--$2\%$ drop compared to \ranmask and \randel),
while significantly enhancing robustness guarantees compared to \ranmask\ and \randel.
In particular, they improve the mean certified radius by an average of over $50\%$,
and the median cardinality of the certified region by up to \emph{30 orders of magnitude}.
This substantial expansion highlights the effectiveness of adaptive deletion in certifying robustness
against a significantly broader space of perturbations,
making it particularly valuable in adversarial settings where sequence manipulations are common.

We observe that methods based on \vardel\ and \optdel\ generally yield higher
certified accuracy than \randel\ and \ranmask\ for larger cardinalities,
reflecting the same trend of increased robustness reported in our mean CR and median CC metrics.
These improvements are especially pronounced for \assassin\ and \lun\ datasets which contain more uniform length variations
(see quantile of lengths in Figure~\ref{fig:quantile_certified_accuracy} and Figure~\ref{fig:quantile_certified_accuracy_extra}).
Our results show improved performance for longer sequences that deviate from the average length.

\subsection{Robustness Scaling with Input Size} \label{sec:evaluation-certify-scaling}

\begin{figure*}
    \centering
    \subfigure[\yelp\ dataset]{
        \includegraphics[width=0.48\linewidth]{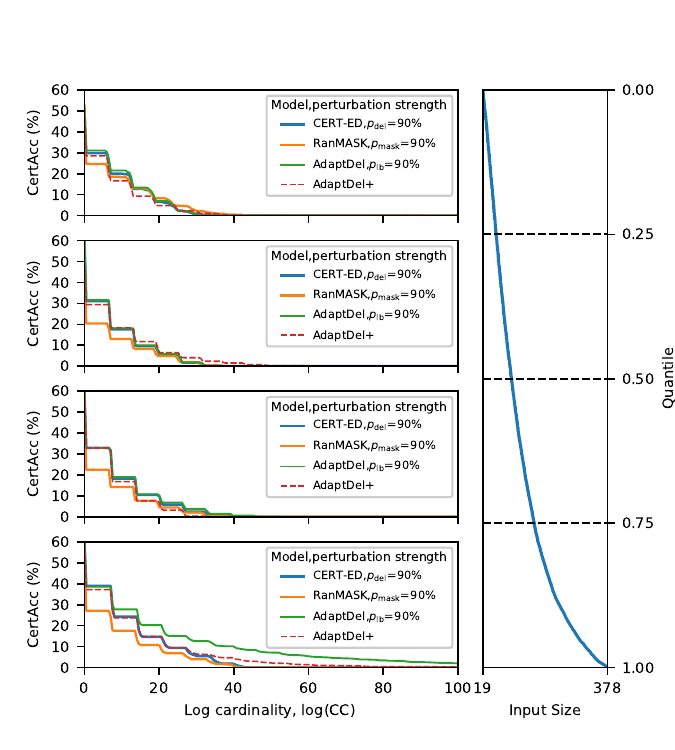}
    }
    \subfigure[\assassin\ dataset]{
        \includegraphics[width=0.48\linewidth]{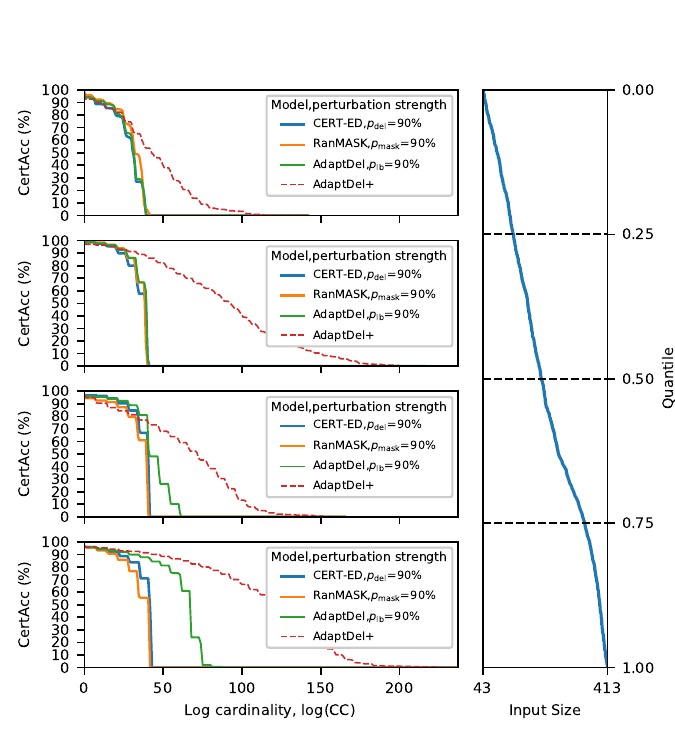}
    }
    \caption{
        Certified accuracy as a function of the log-cardinality of the certified region, grouped by quartile of input size.
        The subfigure on the right displays the quantile by input size,
        with the dashed lines indicating the quartiles corresponding to the certified accuracy plots on the left.
        Each set of axes (top to bottom) corresponds to a split of the test set on the length-based quartiles (smallest to largest).
        For example, the second plot from top to bottom shows the certified accuracies of examples within Q1 to Q2.
        The results demonstrate that the methods scale effectively across varying input sizes,
        with higher certified accuracy achieved for larger input sizes and higher log-cardinality regions.
    }
    \label{fig:quantile_certified_accuracy}
\end{figure*}
Figure~\ref{fig:quantile_certified_accuracy} shows how certified accuracy scales
with respect to both input size (grouped by quartile) and the log-cardinality of the certified region.
Each panel\slash facet on the left of the figure corresponds to a quartile,
where the range of input sizes can be read from the quantile function on the right.
For the \yelp\ dataset (left), results within the first two quarters are slightly mixed with
\vardel\ and \optdel\ showing moderate advantage over \ranmask\ and \randel. 
This is expected, as \vardel\ is configured to match the fixed deletion rate of \randel\ for short sequences, 
with its advantages becoming more pronounced as sequence length increases.
However, for the fourth quarter where the length variations is concentrated,
both \vardel\ and \optdel\ achieve substantial gains in certified accuracy,
indicating that as input size grows, our smoothing techniques can certify robustness against a broader range of adversarial edits.
Meanwhile, for the \assassin\ dataset (right), a similar pattern emerges, but the benefits are more pronounced.
Additionally, we observe that \optdel\ maintains significantly stronger robustness guarantees across all lengths,
showing the effectiveness of our calibration strategy in enhancing certified accuracy for varying input sizes.

Overall, our approach not only preserves clean accuracy but significantly enhances certified robustness, particularly for longer sequences.
Compared to state-of-the-art methods, we achieve an average improvement in mean certified radius of over
50\% and increase the median certified region cardinality by up to 30 orders of magnitude. 
This demonstrates superior scalability and highlights the practical value of tailoring the smoothing rate to input 
length, a property that can be critical in adversarial settings where length variations are common.

\section{Related Work} \label{sec:related}

In addition to the works mentioned in the introduction, we now describe approaches closest to ours.

\paragraph{Certification for Edit Perturbations}

There is a growing body of work aimed at extending certification to handle edit-based threat models on structured 
input spaces. %
In the natural language domain, randomized smoothing has been applied to certify robustness against synonym and word substitution attacks
\citep{ye2020safer,wang2021certified,zeng2023certified,ji2024advancing}.
However, their certificates do not cover insertions or deletions of words.
To address this gap, recent work has studied broader threat models that go beyond substitutions.
\citet{zhang2024textcrs} provide certificates against permutations and $\ell_2$-bounded perturbations of sequence 
elements in embedding space.
\citet{rocamora2024certified} provide edit distance certificates for convolutional classifiers with bounded
Lipschitz constants.
While their approach does not require smoothing, it is limited to convolutional architectures and yields empirical 
accuracy and robustness guarantees that are significantly smaller than those of the randomized smoothing approach 
considered in this paper.

Beyond sequences, edit distance certification has also been explored for sets and graphs. 
\citet{liu2021pointguard} apply subsampling-based randomized smoothing to certify
point cloud classifiers against a bounded number of point insertions, deletions, and substitutions. 
\citet{schuchardt2023adversarial} propose a general framework for group equivariant tasks that 
provides edit distance certificates for graph classifiers~\citep{bojchevski2020efficient}.

\paragraph{Input-Dependent Smoothing}
Most applications of randomized smoothing consider additive noise mechanisms, where the noise level is independent 
of the input. 
While this simplifies the robustness analysis, it is suboptimal and has been shown to result in disparities in 
class-wise accuracy \citep{mohapatra2021hidden}.
Several works have attempted to address this, particularly for input-dependent Gaussian noise.
\citet{wang2021adversarial} divide the input space into several ``robust regions'', each with a constant calibrated 
noise-level, and use the fixed-noise certificate of \citet{cohen2019certified} within each region. 
However, their approach assumes all test samples are available for calibration and it comes with higher 
train\slash test costs. 
\citet{eiras2022ancer} and \citet{alfarra2022data} also use the fixed-noise certificate as a starting point, but 
before issuing a new certificate they apply a correction that depends on all previously issued certificates. 
By contrast, \citet{sukenik2022intriguing} obtain a tight certificate for input-dependent Gaussian noise by 
generalizing the proof technique of \citet{cohen2019certified}. 
They find that the variability of the noise scale is limited in practice due to the curse of dimensionality. 
\citet{lyu2024adaptive} propose a general framework for input-dependent smoothing mechanisms that can potentially 
consist of multiple steps. 
However, in order to obtain a certificate, the mechanism must satisfy functional differential privacy. 

While our approach shares some similarities with the above prior work, it differs fundamentally in two ways.  
First, the edit distance certificates we consider include inputs with variable dimensions (sequence lengths), 
which is not the case for $\ell_p$ certificates where all inputs have the same dimensions. 
This makes deriving a sound certificate for dimension-dependent noise more challenging in our case. 
We note that Gaussian smoothing with a dimension-dependent noise scale is covered by the standard fixed-noise 
certificate of \citet{cohen2019certified}.  
\citeauthor{cohen2019certified} observed that higher dimensional images can tolerate large noise scales while retaining 
visual information---we enable such noise scaling for variable-length sequences while ensuring sound certification. 
Second, our certificate and mechanism are qualitatively very different---our deletion mechanism cannot 
be expressed as additive noise, and edit distance is arguably more challenging to analyze than $\ell_p$ distance 
given it is defined as the solution to a discrete optimization problem.

\section{Conclusion} \label{sec:conclusion}

In this work we propose variable rate deletion for sequence classification---extending randomized smoothing certification to adapt to inputs of varying length.
We develop a theoretical framework for variable rate deletion, which allows the deletion probability to adapt dynamically based on input properties.
Building on this foundation, we propose \vardel\ and \optdel,
mechanisms designed to enhance robustness certification while maintaining competitive performance.
The former uses a length-dependent deletion rate,
whereas the latter further optimizes the deletion rate using input binning and empirical calibration.

Our results demonstrate that \vardel\ and \optdel\ consistently outperform existing state-of-the-art methods,
such as \randel\ and \ranmask,
across diverse NLP tasks with varying input lengths.
Our methods achieve a superior trade-off, offering significant gains in certified robustness while maintaining 
competitive clean accuracy. 
This highlights their effectiveness in real-world scenarios (\eg LLMs) where input sequences can vary significantly in length.
Our contributions pave the way for more adaptable and robust approaches to certifying sequence classifiers.

\section*{Limitations} \label{sec:limitation}
Our work focuses on the problem of certified robustness for sequence classification against edit distance perturbations. While edit distance offers a more appropriate threat model than $\ell_p$-bounded attacks, this choice is still a syntactic surrogate for attacks that preserve semantic similarity. And while our contributions permit length-dependent deletion rates, it is conceivable that more general input-dependent deletion rates could yield improved certified robustness.

\begin{ack}
We acknowledge partial funding from the Australian Research Council DP220102269.
This research was supported by The University of Melbourne's Research Computing Services and the Petascale Campus Initiative.
\end{ack}

\FloatBarrier

\bibliographystyle{unsrtnat}
\bibliography{references}

\appendix
\newpage

\section{Proofs and Additional Results} \label{app-sec:theory}

This appendix provides proofs for the results in Sections~\ref{sec:deletion-analysis} 
and~\ref{sec:deletion-specialization}. 
For the proof of Lemma~\ref{lem:equiv-edit-del}, we refer the reader Lemma~4 of \citet{huang2023rsdel}.
We also provide an additional result (Lemma~\ref{lem:recover-rsdel}) showing that our certificate for variable rate 
deletion recovers the certificate of \citet{huang2023rsdel} for fixed rate deletion, with only a negligible 
difference. 

Before proving Lemma~\ref{lem:var-del-cert-pairwise-lb}, we obtain bounds on the sums that appear in the decomposition 
of the smoothed class probability $p_y(\pert{\vec{x}}; \base{f})$ presented in Equation~\ref{eqn:relate-score-pair}.
\begin{lemma}\label{lem:bijection-bound}
    \begin{align}
        \sum_{\medit \in \scrE(\vec{x}) \setminus \scrE(\vec{x}, \medit^\star)} s_{\base{f}}(\medit, \vec{x}) 
        &\leq 1 - \fdel^{\abs{\vec{x}} - \abs{\vec{z^\star}}}    \label{eqn:bijection-upper-bound} \\
        \sum_{\medit \in \scrE(\vec{x}, \medit^\star)} s_{\base{f}}(\medit, \vec{x}) 
        & \geq p_y(\vec{x}; \base{f}) - 1 + \fdel^{\abs{\vec{x}} - \abs{\vec{z^\star}}}.
    \end{align}
\end{lemma}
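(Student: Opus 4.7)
The plan is to compute the total probability mass assigned by the mechanism to the set $\scrE(\vec{x}, \medit^\star)$, which is the quantity that drives both inequalities. The key combinatorial observation is that $\medit \in \scrE(\vec{x}, \medit^\star)$ iff $\medit^\star \sqsubseteq \medit$, i.e., every position retained by $\medit$ is also retained by $\medit^\star$. Equivalently, the $\abs{\vec{x}} - \abs{\vec{z^\star}}$ positions deleted by $\medit^\star$ must also be deleted by $\medit$, while the remaining $\abs{\vec{z^\star}}$ positions (those producing the LCS) are unconstrained. Using the i.i.d.\ factorisation $q(\medit \mid \vec{x}) = \prod_i \fdel^{1-\mdel_i}(1-\fdel)^{\mdel_i}$ and marginalising over the free positions,
\[
  \sum_{\medit \in \scrE(\vec{x}, \medit^\star)} q(\medit \mid \vec{x}) = \fdel^{\abs{\vec{x}} - \abs{\vec{z^\star}}},
\]
so the complementary sum is $1 - \fdel^{\abs{\vec{x}} - \abs{\vec{z^\star}}}$.

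From here both inequalities follow quickly. Since $s_{\base{f}}(\medit, \vec{x}) = q(\medit \mid \vec{x}) \ind{\base{f}(\apply(\vec{x}, \medit)) = y} \leq q(\medit \mid \vec{x})$ pointwise, summing over the complement of $\scrE(\vec{x}, \medit^\star)$ immediately yields inequality~(\ref{eqn:bijection-upper-bound}). For the second inequality, I split the definition $p_y(\vec{x}; \base{f}) = \sum_{\medit \in \scrE(\vec{x})} s_{\base{f}}(\medit, \vec{x})$ along the same partition of $\scrE(\vec{x})$, then rearrange and substitute the first bound:
\[
  \sum_{\medit \in \scrE(\vec{x}, \medit^\star)} s_{\base{f}}(\medit, \vec{x}) = p_y(\vec{x}; \base{f}) - \sum_{\medit \in \scrE(\vec{x}) \setminus \scrE(\vec{x}, \medit^\star)} s_{\base{f}}(\medit, \vec{x}) \geq p_y(\vec{x}; \base{f}) - 1 + \fdel^{\abs{\vec{x}} - \abs{\vec{z^\star}}}.
\]

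There is no genuinely hard step here---the lemma hinges on recognising that the $\sqsubseteq$-extension condition fixes exactly the $\abs{\vec{x}} - \abs{\vec{z^\star}}$ Bernoulli factors corresponding to non-LCS positions. The only place I would pause to double-check is the direction of the partial order, since $\medit^\star \sqsubseteq \medit$ corresponds to $\mdel^\star_i \geq \mdel_i$ (recalling that $\mdel_i = 1$ indicates \emph{retention}), which is the reverse of what one might naively expect from the word \emph{extension}.
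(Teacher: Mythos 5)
Your proof is correct and follows essentially the same route as the paper's: identify that membership in $\scrE(\vec{x}, \medit^\star)$ forces the $\abs{\vec{x}} - \abs{\vec{z^\star}}$ positions that $\medit^\star$ deletes to be deleted, compute the total $q$-mass of that set as $\fdel^{\abs{\vec{x}} - \abs{\vec{z^\star}}}$ by marginalising the free positions, bound $s_{\base{f}} \leq q$ pointwise for the first inequality, and rearrange the decomposition of $p_y$ for the second. If anything, your write-up is cleaner: the paper's displayed chain opens with an equality $\sum_{\medit \notin \scrE(\vec{x}, \medit^\star)} s_{\base{f}} = 1 - \sum_{\medit \in \scrE(\vec{x}, \medit^\star)} s_{\base{f}}$, which only holds with $q$ in place of $s_{\base{f}}$ (since $\sum_{\scrE(\vec{x})} s_{\base{f}} = p_y \leq 1$), whereas you correctly pass through $q$ first and then apply $s_{\base{f}} \leq q$, and your aside about the direction of $\sqsubseteq$ (retention means $\mdel_i = 1$, so extension means deleting more) resolves the one place where a sign error would be easy to make.
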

\begin{proof}
    First, observe that
    \begin{align*}
        \sum_{\medit \in \scrE(\vec{x}) \setminus \scrE(\vec{x}, \medit^\star)} s_{\base{f}}(\medit, \vec{x}) 
        & = 1 - \sum_{\medit \in \scrE(\vec{x}, \medit^\star)} s_{\base{f}}(\medit, \vec{x}) \\     
        & \leq 1 - \sum_{\medit \in \scrE(\vec{x}, \medit^\star)} q(\medit|\vec{x}) \\
        & = 1 - \fdel^{\abs{\vec{x}} - \abs{\vec{z^\star}}}
        \sum_{\medit \in \scrE(\vec{x}, \medit^\star)} q({\medit^\star - \medit}|\vec{x}) \\
        & \leq 1 - \fdel^{\abs{\vec{x}} - \abs{\vec{z^\star}}}.
    \end{align*}
    Then it is straightforward to decompose the smoothed class probability as
    \begin{align*}
        p_y(\vec{x}; \base{f}) &= 
        \sum_{\medit \in \scrE(\vec{x}, \vec{\mdel^\star})} s_{\base{f}}(\medit, \vec{x}) 
            + \sum_{\medit \in \scrE(\vec{x}) \setminus \scrE(\vec{x}, \vec{\mdel^\star})} s_{\base{f}}(\medit, \vec{x}) \\
    \end{align*}
    and obtain the lower bound
    \begin{align*}
        \sum_{\medit \in \scrE(\vec{x}, \vec{\mdel^\star})} s_{\base{f}}(\medit, \vec{x}) 
        &= p_y(\vec{x}; \base{f}) 
            - \sum_{\medit \in \scrE(\vec{x}) \setminus \scrE(\vec{x}, \vec{\mdel^\star})} s_{\base{f}}(\medit, \vec{x}) \\
        & \geq p_y(\vec{x}; \base{f}) - 1 + \fdel^{\abs{\vec{x}} - \abs{\vec{z^\star}}}.
    \end{align*}
\end{proof}

Our proof strategy for Lemma~\ref{lem:var-del-cert-pairwise-lb}, restated below, is inspired by the analysis of 
\citet{lee2019tight} for continuous-space smoothing. 
However, several key adaptations are necessary for the discrete, variable-length setting. 
First, their analysis relies on partitioning the space of perturbed inputs and ordering the parts by increasing 
likelihood.
This is not possible in our case as this likelihood is not tractable due to the fact that multiple paths can lead 
to the same perturbed input. 
Second, discrete space requires careful treatment of rounding to ensure validity of the bound.

\getkeytheorem{lem:var-del-cert-pairwise-lb}
\begin{proof}
    We will prove the case where $\fdel \geq \pert{\fdel}$, the proof for the opposite case follows a similar logic.
    By dropping the second sum in Equation~\ref{eqn:relate-score-pair}, we obtain the following lower bound:
    \begin{align*}
        p_y(\pert{\vec{x}}; \base{f}) \geq \frac{\pert{\fdel}^{\abs{\pert{\vec{x}}}} }{\fdel^{\abs{\vec{x}}}}
        \underbrace{\sum_{\medit \in \scrE(\vec{x}, \medit^\star)} \rho(\fdel, \pert{\fdel})^{\abs{\medit}} s_{\base{f}}(\medit, \vec{x})}_{\sigma(\pert{\vec{x}}; \base{f})}.
    \end{align*}
    Since $\frac{\pert{\fdel}^{\abs{\pert{\vec{x}}}} }{\fdel^{\abs{\vec{x}}}}$ is a non-negative constant,
    we focus on lower bounding the summation term $\sigma(\pert{\vec{x}}; \base{f})$ and reinstate the constant later.

    Let $\mathcal{F} = \{\pert{f} \mid p_y(\vec{x}; \pert{f}) = \mu \}$ 
    be the set of base classifiers that are consistent with the observed value of the smoothed class probability 
    $\mu = p_y(\vec{x}; \base{f})$. 
    We seek a lower bound on $\sigma(\pert{\vec{x}}, \base{f})$ that does not depend on the functional form of 
    the base classifier, so we consider the worst case in the set $\mathcal{F}$:
    \begin{align}
        \sigma(\pert{\vec{x}}; \base{f})%
        \geq
        \min_{\pert{f} \in \mathcal{F}}
        \sum_{\medit \in \scrE(\vec{x}, \medit^\star)}
        \rho(\fdel, \pert{\fdel})^{\abs{\medit}}
        s_{\pert{f}}(\medit, \vec{x}). \label{eqn:optim-obj}
    \end{align}
    We can rewrite the sum over $\scrE(\vec{x}, \medit^\star)$ as a double sum over the number of 
    retained elements~$i$, followed by a sum over edits in the set 
    $\scrE_i(\vec{x}, \vec{\mdel^\star}) = \{\medit \in \scrE(\vec{x}, \vec{\mdel^\star}) \mid \abs{\medit} = i\}$.
    Hence we have
    \begin{align}
        \text{RHS of \eqref{eqn:optim-obj}}
        &= \min_{\pert{f} \in \mathcal{F}}
        \sum_{i = 0}^{\abs{\vec{z}^\star}}
        \rho(\fdel, \pert{\fdel})^{i}
        \sum_{\medit \in \scrE_i(\vec{x}, \medit^\star)}
        s_{\pert{f}}(\medit, \vec{x}) \nonumber \\
        &= \min_{\pert{f} \in \mathcal{F}}
        \sum_{i = 0}^{\abs{\vec{z}^\star}}
        \fdel^{\abs{\vec{z}^\star}} \pert{\fdel}^{-i} (1 - \pert{\fdel})^i
        \sum_{\medit \in \scrE_i(\vec{x}, \medit^\star)} \ind{\pert{f}(\apply(\vec{x}, \medit)) = y},
        \label{eqn:optim-obj-double-sum}
    \end{align}
    where the second equality follows from the definitions of $s_{\pert{f}}(\medit, \vec{x})$ and 
    $\rho(\fdel, \pert{\fdel})$ in Lemma~\ref{lem:equiv-edit-del} and the i.i.d.\ Bernoulli specification 
    of $q(\medit | \vec{x})$. 

    Let $[n] = \{0, \ldots, n\}$ denote the set of non-negative integers up to $n$.
    We can rewrite the minimization problem in Equation~\ref{eqn:optim-obj-double-sum} as a 
    minimization problem over the set of functions $\mathcal{H}$ from $[N]$ to $[M]$ where  
    $N = \abs{\vec{z}^\star}$ and $M = \bigcup_{i \in [N]} \binom{N}{i}$.
    Specifically, we have
    \begin{align}
        \min_{h \in \mathcal{H}} \quad
            & \sum_{i = 0}^{N} v(i) h(i) \nonumber \\
        \text{s.t.} \quad
            & h(i) \equiv \sum_{\medit \in \scrE_i(\vec{x}, \medit^\star)} \ind{\pert{f}(\apply(\vec{x}, \medit)) = y} 
            \quad \text{and} \quad \pert{f} \in \mathcal{F}, \label{eqn:optim-contraint-H}
    \end{align}
    where we have defined $v(i) \coloneqq \fdel^{N} \pert{\fdel}^{-i} (1 - \pert{\fdel})^i$.
    Next, we continue to lower bound the solution to the above problem by relaxing the constraints.
    First, we replace $\mathcal{F}$ by a superset (invoking Lemma~\ref{lem:bijection-bound}):
    \begin{equation*}
        \mathcal{F}' = \left\{ \pert{f} \mathrel{\Big|} 
            \sum_{i = 0}^{N} w(i) \sum_{\medit \in \scrE_i(\vec{x}, \medit^\star)} \ind{\pert{f}(\apply(\vec{x}, \medit)) = y} 
                \geq W 
        \right\},
        \label{eqn:candidate-F-prime}
    \end{equation*}
    where we have defined $w(i) \coloneqq (1 - \fdel)^{i} \fdel^{N - i}$ and 
    $W \coloneqq \mu - 1 + \fdel^{\abs{\vec{x}} - N}$. 
    The new constraint on $h$ (replacing Equation~\ref{eqn:optim-contraint-H}) is then
    \begin{equation*}
        h(i) \equiv \sum_{\medit \in \scrE_i(\vec{x}, \medit^\star)} \ind{\pert{f}(\apply(\vec{x}, \medit)) = y} 
        \quad \text{and} \quad \sum_{i = 0}^{N} w(i) h(i) \geq W \quad \text{and} \quad \pert{f} \in \scrX \to \scrY.
    \end{equation*}
    Second, we relax the constraint further, by allowing $h(i)$ to be any integer in the set $[\binom{N}{i}]$.

    The resulting relaxed minimization problem is a bounded knapsack problem:
    \begin{align}
        \min_{h \in \mathcal{H}} \quad
            & \sum_{i = 0}^{N} v(i) h(i) \label{eqn:optim-obj-H-prime} \\
        \text{s.t.} \quad
            & \sum_{i = 0}^{N} w(i) h(i) \geq W
            \quad \text{and} \quad h(i) \leq \binom{N}{i}. \label{eqn:optim-constraint-H-prime}
    \end{align}
    Noting that the value per unit weight $v(i) / w(i) = \rho(\fdel, \pert{\fdel})^i$ is increasing 
    in $i$ for $\psi \geq \pert{\psi}$, we propose the following greedy ``solution'', which slightly violates 
    the constraint:
    \begin{align*}
        h^\ast(i) & =
        \begin{cases}
            \binom{N}{i}, & i < H^\ast        \\
            \floor*{
                \frac{W - \sum_{j=0}^{H^\ast-1} \binom{N}{i} w(j)}{w(H^\ast)}
            },
               & i = H^\ast           \\
            0, & \text{otherwise},
        \end{cases}
    \end{align*}
    where $H^\ast = \min_{H: \sum_{i = 0}^{H} \binom{N}{i} w(j) \geq W} H$.
    
    We now show that $h^\ast$ yields a valid lower bound on Equation~\ref{eqn:optim-obj-H-prime}.
    First, observe that
    \begin{align}
        \sum_{i=0}^N w(i) h^\ast(i) \leq W,
        \label{eqn:constraint-result}
    \end{align}
    which means the constraint in Equation~\ref{eqn:optim-constraint-H-prime} is not necessarily satisfied by $h^\ast$. 
    Now suppose there exists a solution $h'$ to Equation~\ref{eqn:optim-obj-H-prime} that yields a tighter lower bound 
    than $h^\ast$.
    Define $\Delta(i) \coloneq w(i) (h^\ast(i) - h'(i))$ 
    and observe that $\Delta(i) \geq 0$ when $i < H^\ast$ and $\Delta(i) \leq 0$ when $i > H^\ast$.
    By combining Equation~\ref{eqn:constraint-result} (for $h^\ast$) and Equation~\ref{eqn:optim-constraint-H-prime} 
    (for $h'$), we have $\sum_{i=0}^N \Delta(i) \leq 0$.
    Together, these results imply
    \begin{align*}
        \sum_{i = 0}^{N} \rho(\fdel, \pert{\fdel})^i \Delta(i)
        \leq &
        \rho(\fdel, \pert{\fdel})^{H^\ast} \sum_{i = 0}^{N} \Delta(i)
        \leq 0.
    \end{align*}
    The first inequality holds because we are increasing the magnitude of positive terms ($i < H^\ast$) and decreasing 
    the magnitude of negative terms ($i > H^\ast$), noting that $\rho(\fdel, \pert{\fdel})^{i}$ is increasing in $i$.
    This then implies (by expanding the definition of $\Delta(i)$):
    \begin{align*}
        \sum_{i = 0}^{N} 
        v(i) h^\ast(i)
        = \sum_{i = 0}^{N} 
        \rho(\fdel, \pert{\fdel})^i w(i) h^\ast(i)
        \leq &
        \sum_{i = 0}^{N} \rho(\fdel, \pert{\fdel})^i w(i) h'(i)
        = \sum_{i = 0}^{N} v(i) h'(i),
    \end{align*}
    which is a contradiction, thereby confirming that $h^\ast$ yields a valid lower bound.
    
    Substituting $h^\ast$ in the objective yields the following lower bound:
    \begin{align*}
        \sigma(\pert{\vec{x}}; \base{f}) 
        & \geq \sum_{i = 0}^{N} v(i) h^\ast(i) \notag \\
        & = \left(\frac{\fdel}{\pert{\fdel}}\right)^{\abs{z^\star}} \Bigg\{ \sum_{i=0}^{H^\ast-1}
         \revbinom{\abs{\vec{z}^\star}}{\pert{\fdel}}{i} \\
        & \qquad {} + 
        \revbinom{\abs{\vec{z}^\star}}{\pert{\fdel}}{H^\ast}
        \floor*{
            \frac{
                \mu - 1 + \fdel^{\abs{\vec{x}} - \abs{\vec{z}^\star}}
                - \sum_{j=0}^{H^\ast-1} \revbinom{\abs{\vec{z^\star}}}{\fdel}{j}
            }{
                \revbinom{\abs{\vec{z^\star}}}{\fdel}{H^\ast}
            }
        }_{\binom{\abs{\vec{z}^\star}}{H^\ast}^{-1}} \Bigg\}. 
    \end{align*}
    Finally, we multiple both sides of the above inequality by $\frac{\pert{\fdel}^{\abs{\pert{\vec{x}}}} }{\fdel^{\abs{\vec{x}}}}$ 
    to obtain the desired lower bound on $p_y(\pert{\vec{x}}; \base{f})$. 
    
\end{proof}

We use the same proof strategy to obtain an upper bound on the smoothed probability below. 
The upper bound can be used to derive a robustness certificate for in the style of \citet{lecuyer2019certified}.

\begin{lemma}\label{lem:var-del-cert-pairwise-ub}
    Let $\vec{x}, \pert{\vec{x}} \in \scrX$ be a pair of inputs with a longest common subsequence (LCS)
    $\vec{z}^\star$ and $\mu = p_y(\vec{x}; \base{f})$.
    Define
    \begin{align*}
        H^\ast = \begin{cases}
            \min_{H: \sum_{i=0}^{H} \revbinom{\abs{\vec{z}^\star}}{\fdel}{i} \geq \mu} H, 
                & \fdel \leq \pert{\fdel}, \\
            \max_{H: \sum_{i=H}^{\abs{\vec{z}^\star}} \revbinom{\abs{\vec{z}^\star}}{\fdel}{i} \geq \mu} H, 
                & \fdel > \pert{\fdel},
        \end{cases}
    \end{align*}
    as a threshold on the number of tokens retained when editing $\vec{x}$, where
    $\revbinom{n}{p}{k} \coloneq \binom{n}{k} (1 - p)^k p^{n - k}$ is the Binomial pmf for $n$ trials with
    with success probability $1 - p$.
    Then there exists an upper bound 
    $\ub(\mu, \vec{x}, \pert{\vec{x}}, \psi) \geq p_y(\pert{\vec{x}}; \base{f})$ such that:
    \begin{align*}
        \ub(\mu, \vec{x}, \pert{\vec{x}}, \psi) = 
         & \frac{\pert{\fdel}^{\abs{\pert{\vec{x}}} - \abs{\vec{z}^\star}}}{\fdel^{\abs{\vec{x}} - \abs{\vec{z}^\star}}}
        \Bigg( \sum_{i=\bm{g} (H^\ast + 1)}^{(1 - \bm{g}) (H^\ast - 1) + \bm{g} \abs{\vec{z}^\star}}
        \revbinom{\abs{\vec{z}^\star}}{\pert{\fdel}}{i} \\
        & \qquad {}+ \revbinom{\abs{\vec{z}^\star}}{\pert{\fdel}}{H^\ast} 
        \ceil*{
            \frac{
                c(\mu, \abs{\vec{z}^\star}, \fdel, H^\ast)
            }{
                \revbinom{\abs{\vec{z^\star}}}{\fdel}{H^\ast}
            }
        }_{\binom{\abs{\vec{z^\star}}}{H^\ast}^{-1}} 
        \Bigg) + 1 - \pert{\fdel}^{\abs{\pert{\vec{x}}} - \abs{\vec{z^\star}}},
    \end{align*}
    where
    \begin{align*}
         c(\mu, \abs{\vec{z}^\star}, \fdel, H^\ast) = 
                \mu - \sum_{i=\bm{g} (H^\ast+1)}^{(1 - \bm{g}) (H^\ast - 1) + \bm{g} \abs{\vec{z}^\star}}
                \revbinom{\abs{\vec{z^\star}}}{\fdel}{j},
    \end{align*}
    $\bm{g} = \ind{\fdel > \pert{\fdel}}$ is a binary indicator 
    and $\ceil{\cdot}_{v} \coloneq \ceil{\frac{\cdot}{v}} v$ is a gridded ceiling operation.
\end{lemma}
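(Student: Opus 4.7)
The plan is to mirror the proof of Lemma~\ref{lem:var-del-cert-pairwise-lb}, reversing each key inequality. Starting from the decomposition in Equation~\ref{eqn:relate-score-pair}, instead of dropping the second sum (which produced the lower bound), I would upper bound it using the analog of inequality~\eqref{eqn:bijection-upper-bound} from Lemma~\ref{lem:bijection-bound} applied at $\pert{\vec{x}}$, namely $\sum_{\pert{\medit} \in \scrE(\pert{\vec{x}}) \setminus \scrE(\pert{\vec{x}}, \pert{\medit}^\star)} s_{\base{f}}(\pert{\medit}, \pert{\vec{x}}) \leq 1 - \pert{\fdel}^{\abs{\pert{\vec{x}}} - \abs{\vec{z}^\star}}$. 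This contributes exactly the additive constant $+\, 1 - \pert{\fdel}^{\abs{\pert{\vec{x}}} - \abs{\vec{z}^\star}}$ that appears at the end of the claimed bound.

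Next I would handle the first sum over $\scrE(\vec{x}, \medit^\star)$ by applying Lemma~\ref{lem:equiv-edit-del} and then \emph{maximizing} (rather than minimizing) over plausible base classifiers. Since $p_y(\vec{x}; \pert{f}) = \mu$ and the leftover sum at $\vec{x}$ is non-negative, the relevant constraint simplifies to $\sum_{\medit \in \scrE(\vec{x}, \medit^\star)} s_{\pert{f}}(\medit, \vec{x}) \leq \mu$. Grouping edits by retained-token count $i = \abs{\medit}$, writing $h(i) = \sum_{\medit \in \scrE_i(\vec{x}, \medit^\star)} \ind{\pert{f}(\apply(\vec{x}, \medit)) = y}$, and relaxing each $h(i)$ to any integer in $[0, \binom{\abs{\vec{z}^\star}}{i}]$ casts the problem as a bounded knapsack \emph{maximization} with values $v(i) \propto \revbinom{\abs{\vec{z}^\star}}{\pert{\fdel}}{i}$, weights $w(i) = \revbinom{\abs{\vec{z}^\star}}{\fdel}{i}$, and capacity $\mu$.

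Since the value-per-weight ratio $v(i)/w(i) = \rho(\fdel, \pert{\fdel})^i$ is monotone in $i$, a greedy strategy solves this knapsack. When $\fdel \leq \pert{\fdel}$ we have $\rho \leq 1$, so the ratio is non-increasing; greedy fills items in order of increasing $i$ up to the smallest threshold $H^\ast$ for which the cumulative weight would exceed $\mu$, matching the $\min$-form of $H^\ast$ in the statement. When $\fdel > \pert{\fdel}$ the ratio is increasing and the fill order reverses, giving the $\max$-form together with the summation range $i \in [H^\ast + 1, \abs{\vec{z}^\star}]$; the indicator $\bm{g}$ packages both cases into a single expression.

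The main obstacle will be correctly orienting the discrete rounding at the critical level $i = H^\ast$. Because we are now maximizing, the fractional fill must be rounded \emph{up} to the nearest multiple of $\binom{\abs{\vec{z}^\star}}{H^\ast}^{-1}$, producing the gridded ceiling $\ceil{\cdot}_{\binom{\abs{\vec{z}^\star}}{H^\ast}^{-1}}$ in the claim; the resulting $h^\ast$ will slightly overshoot the capacity $\mu$, which is precisely what is required to preserve an upper bound. An exchange argument parallel to that used in the lower-bound proof, with each inequality sign flipped and again leveraging the monotonicity of $\rho^i$, then certifies optimality of this greedy solution. Reinstating the prefactor $\pert{\fdel}^{\abs{\pert{\vec{x}}}} / \fdel^{\abs{\vec{x}}}$ (absorbing $\abs{\vec{z}^\star}$ factors to obtain the stated $\pert{\fdel}^{\abs{\pert{\vec{x}}} - \abs{\vec{z}^\star}} / \fdel^{\abs{\vec{x}} - \abs{\vec{z}^\star}}$ coefficient) and adding the constant from the first step yields the claimed bound.
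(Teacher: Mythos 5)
Your proposal mirrors the paper's proof exactly: bound the second sum in Equation~\ref{eqn:relate-score-pair} via the analog of~\eqref{eqn:bijection-upper-bound} at $\pert{\vec{x}}$, maximize the first sum over the relaxed constraint $\sum_{\medit \in \scrE(\vec{x},\medit^\star)} s_{\pert{f}}(\medit,\vec{x}) \leq \mu$ as a bounded knapsack, solve greedily with the gridded ceiling (overshooting $\mu$), and verify optimality by the flipped-sign exchange argument. Apart from a harmless imprecision in describing $v(i)$ and $w(i)$ as the aggregate quantities $\revbinom{\abs{\vec{z}^\star}}{\pert{\fdel}}{i}$ and $\revbinom{\abs{\vec{z}^\star}}{\fdel}{i}$ rather than the per-item quantities (the $\binom{N}{i}$ factors cancel in the ratio, so the ordering and the greedy choice are unaffected), this is the paper's argument.
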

\begin{proof}
    We will prove the case where $\fdel \leq \pert{\fdel}$.
    By combining Equations~\ref{eqn:relate-score-pair} and~\ref{eqn:bijection-upper-bound}, we obtain the following 
    upper bound:
    \begin{align*}
        p_y(\pert{\vec{x}}; \base{f}) \leq \frac{\pert{\fdel}^{\abs{\pert{\vec{x}}}} }{\fdel^{\abs{\vec{x}}}}
        \underbrace{\sum_{\medit \in \scrE(\vec{x}, \medit^\star)} \rho(\fdel, \pert{\fdel})^{\abs{\medit}} s_{\base{f}}(\medit, \vec{x})}_{\sigma(\pert{\vec{x}}; \base{f})}
        + \left(1 - \pert{\fdel}^{\abs{\pert{\vec{x}}} - \abs{\vec{z^\star}}}\right).
    \end{align*}
    We focus on upper bounding the summation term $\sigma(\pert{\vec{x}}; \base{f})$ and reinstate the 
    constant factor $\frac{\pert{\fdel}^{\abs{\pert{\vec{x}}}} }{\fdel^{\abs{\vec{x}}}}$ and term in parentheses 
    later.

    Let $\mathcal{F} = \{\pert{f} \mid p_y(\vec{x}; \pert{f}) = \mu \}$ 
    be the set of base classifiers that are consistent with the observed value of the smoothed class probability 
    $\mu = p_y(\vec{x}; \base{f})$. 
    We seek an upper bound on $\sigma(\pert{\vec{x}}, \base{f})$ that does not depend on the functional form of 
    the base classifier, so we consider the worst case in the set $\mathcal{F}$:
    \begin{align}
        \sigma(\pert{\vec{x}}; \base{f})%
        \leq
        \max_{\pert{f} \in \mathcal{F}}
        \sum_{\medit \in \scrE(\vec{x}, \medit^\star)}
        \rho(\fdel, \pert{\fdel})^{\abs{\medit}}
        s_{\pert{f}}(\medit, \vec{x}). \label{eqn:optim-obj-ub}
    \end{align}
    Rewriting the sum over $\scrE(\vec{x}, \medit^\star)$ as a double sum over the number of retained elements 
    $i$ and edits in $\scrE_i(\vec{x}, \medit^\star)$ (see proof of Lemma~\ref{lem:var-del-cert-pairwise-lb}), we find:
    \begin{align}
        \text{RHS of \eqref{eqn:optim-obj-ub}}
        &= \max_{\pert{f} \in \mathcal{F}}
        \sum_{i = 0}^{\abs{\vec{z}^\star}}
        \rho(\fdel, \pert{\fdel})^{i}
        \sum_{\medit \in \scrE_i(\vec{x}, \medit^\star)}
        s_{\pert{f}}(\medit, \vec{x}) \nonumber \\
        &= \max_{\pert{f} \in \mathcal{F}}
        \sum_{i = 0}^{\abs{\vec{z}^\star}}
        \fdel^{\abs{\vec{z}^\star}} \pert{\fdel}^{-i} (1 - \pert{\fdel})^i
        \sum_{\medit \in \scrE_i(\vec{x}, \medit^\star)} \ind{\pert{f}(\apply(\vec{x}, \medit)) = y}.
        \label{eqn:optim-obj-double-sum-ub}
    \end{align}
    
    Let $[n] = \{0, \ldots, n\}$ denote the set of non-negative integers up to $n$.
    We can rewrite the maximization problem in Equation~\ref{eqn:optim-obj-double-sum-ub} as a 
    maximization problem over the set of functions $\mathcal{H}$ from $[N]$ to $[M]$ where  
    $N = \abs{\vec{z}^\star}$ and $M = \bigcup_{i \in [N]} \binom{N}{i}$.
    Specifically, we have
    \begin{align}
        \max_{h \in \mathcal{H}} \quad
            & \sum_{i = 0}^{N} v(i) h(i) \nonumber \\
        \text{s.t.} \quad
            & h(i) \equiv \sum_{\medit \in \scrE_i(\vec{x}, \medit^\star)} \ind{\pert{f}(\apply(\vec{x}, \medit)) = y} 
            \quad \text{and} \quad \pert{f} \in \mathcal{F}, \label{eqn:optim-contraint-H-ub}
    \end{align}
    where we have defined $v(i) \coloneqq \fdel^{N} \pert{\fdel}^{-i} (1 - \pert{\fdel})^i$.
    Next, we continue to lower bound the solution to the above problem by relaxing the constraints.
    First, we replace $\mathcal{F}$ by a superset:
    \begin{equation*}
        \mathcal{F}' = \left\{ \pert{f} \mathrel{\Big|} 
            \sum_{i = 0}^{N} w(i) \sum_{\medit \in \scrE_i(\vec{x}, \medit^\star)} \ind{\pert{f}(\apply(\vec{x}, \medit)) = y} 
                \leq \mu
        \right\},
        \label{eqn:candidate-F-prime-ub}
    \end{equation*}
    where we have defined $w(i) \coloneqq (1 - \fdel)^{i} \fdel^{N - i}$. 
    The new constraint on $h$ (replacing Equation~\ref{eqn:optim-contraint-H-ub}) is then
    \begin{equation*}
        h(i) \equiv \sum_{\medit \in \scrE_i(\vec{x}, \medit^\star)} \ind{\pert{f}(\apply(\vec{x}, \medit)) = y} 
        \quad \text{and} \quad \sum_{i = 0}^{N} w(i) h(i) \leq \mu \quad \text{and} \quad \pert{f} \in \scrX \to \scrY.
    \end{equation*}
    Second, we relax the constraint further, by allowing $h(i)$ to be any integer in the set $[\binom{N}{i}]$.
    
    The resulting relaxed maximization problem is a bounded knapsack problem:
    \begin{align}
        \max_{h \in \mathcal{H}} \quad
            & \sum_{i = 0}^{N} v(i) h(i) \label{eqn:optim-obj-H-prime-ub} \\
        \text{s.t.} \quad
            & \sum_{i = 0}^{N} w(i) h(i) \leq \mu
            \quad \text{and} \quad h(i) \leq \binom{N}{i}. \label{eqn:optim-constraint-H-prime-ub}
    \end{align}
    Noting that the value per unit weight $v(i) / w(i) = \rho(\fdel, \pert{\fdel})^i$ is decreasing 
    in $i$ for $\psi \leq \pert{\psi}$, we propose the following greedy ``solution'', which slightly violates 
    the constraint:
    \begin{align*}
        h^\ast(i) & =
        \begin{cases}
            \binom{N}{i}, & i < H^\ast        \\
            \ceil*{
                \frac{\mu - \sum_{j=0}^{H^\ast-1} \binom{N}{i} w(j)}{w(H^\ast)}
            },
               & i = H^\ast           \\
            0, & \text{otherwise},
        \end{cases}
    \end{align*}
    where $H^\ast = \min_{H: \sum_{i = 0}^{H} \binom{N}{i} w(j) \geq \mu} H$.

    We now show that $h^\ast$ yields a valid upper bound on Equation~\ref{eqn:optim-obj-H-prime-ub}.
    First, observe that
    \begin{align}
        \sum_{i=0}^N w(i) h^\ast(i) \geq \mu,
        \label{eqn:constraint-result-ub}
    \end{align}
    which means the constraint in Equation~\ref{eqn:optim-constraint-H-prime-ub} is not necessarily satisfied by $h^\ast$. 
    Now suppose there exists a solution $h'$ to Equation~\ref{eqn:optim-obj-H-prime-ub} that yields a tighter upper bound 
    than $h^\ast$.
    Define $\Delta(i) \coloneq w(i) (h^\ast(i) - h'(i))$ 
    and observe that $\Delta(i) \geq 0$ when $i < H^\ast$ and $\Delta(i) \leq 0$ when $i > H^\ast$.
    By combining Equation~\ref{eqn:constraint-result-ub} (for $h^\ast$) and Equation~\ref{eqn:optim-constraint-H-prime-ub} 
    (for $h'$), we have $\sum_{i=0}^N \Delta(i) \geq 0$.
    Together, these results imply
    \begin{align*}
        \sum_{i = 0}^{N} \rho(\fdel, \pert{\fdel})^i \Delta(i)
        \geq &
        \rho(\fdel, \pert{\fdel})^{H^\ast} \sum_{i = 0}^{N} \Delta(i)
        \geq 0.
    \end{align*}
    The first inequality holds because we are decreasing the magnitude of positive terms ($i < H^\ast$) and increasing 
    the magnitude of negative terms ($i > H^\ast$), noting that $\rho(\fdel, \pert{\fdel})^{i}$ is decreasing in $i$.
    This then implies (by expanding the definition of $\Delta(i)$):
    \begin{align*}
        \sum_{i = 0}^{N} 
        v(i) h^\ast(i)
        = \sum_{i = 0}^{N} 
        \rho(\fdel, \pert{\fdel})^i w(i) h^\ast(i)
        \geq &
        \sum_{i = 0}^{N} \rho(\fdel, \pert{\fdel})^i w(i) h'(i)
        = \sum_{i = 0}^{N} v(i) h'(i),
    \end{align*}
    which is a contradiction, thereby confirming that $h^\ast$ yields a valid upper bound.
    
    Substituting $h^\ast$ in the objective yields the following upper bound:
    \begin{align*}
        \sigma(\pert{\vec{x}}; \base{f}) 
        & \leq \sum_{i = 0}^{N} v(i) h^\ast(i) \notag \\
        & = \left(\frac{\fdel}{\pert{\fdel}}\right)^{\abs{z^\star}} \Bigg\{ \sum_{i=0}^{H^\ast-1}
         \revbinom{\abs{\vec{z}^\star}}{\pert{\fdel}}{i} 
         + 
        \revbinom{\abs{\vec{z}^\star}}{\pert{\fdel}}{H^\ast}
        \floor*{
            \frac{
                \mu - \sum_{j=0}^{H^\ast-1} \revbinom{\abs{\vec{z^\star}}}{\fdel}{j}
            }{
                \revbinom{\abs{\vec{z^\star}}}{\fdel}{H^\ast}
            }
        }_{\binom{\abs{\vec{z}^\star}}{H^\ast}^{-1}} \Bigg\}. 
    \end{align*}
    Finally, we multiple both sides of the above inequality by $\frac{\pert{\fdel}^{\abs{\pert{\vec{x}}}} }{\fdel^{\abs{\vec{x}}}}$ 
    to obtain the desired upper bound on $p_y(\pert{\vec{x}}; \base{f})$. 
\end{proof}

\getkeytheorem{thm:certify-length-dep}
\begin{proof}
    By the definition of the smoothed classifier $f$, the prediction is determined by the probability distribution induced by $\phi_{\fdel(\vec{x})}$.
    Let $t_1^\lb$ and $t_2^\ub$ be the Clopper-Pearson lower and upper confidence bounds, respectively, for the probabilities of the top class $y_1$ and the runner-up class $y_2$.
    By Algorithm~\ref{alg:certify-length-dep}, these are estimated such that, with confidence at least $1 - \alpha$,
    $$
        p_{y_1}(\vec{x}; \base{f}, \phi_{\fdel(\vec{x})}) \geq t_1^\lb \quad \text{and} \quad t_2^\ub \geq p_{y_2}(\vec{x}; \base{f}, \phi_{\fdel(\vec{x})}).
    $$
    For any perturbation $\pert{\vec{x}} \in B_r(\vec{x}; \opset)$, Algorithm~\ref{alg:certify-length-dep} 
    and Lemma~\ref{lem:var-del-cert-pairwise-lb} ensures that
    $$
        t_1^{\lb} \geq \pert{t}_1^{\lb} \geq \pert{t}_2^{\ub} \geq t_2^\ub.
    $$
    By transitivity, it follows that
    $$
        p_{y_1}(\vec{x}; \base{f}, \phi_{\fdel(\vec{x})}) > p_{y_2}(\vec{x}; \base{f}, \phi_{\fdel(\vec{x})})
    $$ with probability at least $1 - \alpha$.
    Hence, the predicted class remains $y_1$ for all perturbations within $B_r(\vec{x}; \opset)$, completing the proof.
\end{proof}

\begin{lemma} \label{lem:recover-rsdel}
    By setting the deletion probability as a fixed constant, i.e., $\fdel(\vec{x}) \coloneq p_\del$ for all $\vec{x}$,
    Lemma~\ref{lem:var-del-cert-pairwise-lb} recovers Theorem~5 of \citet{huang2023rsdel} with only a negligible difference.

    Specifically, let $\vec{x}, \pert{\vec{x}} \in \scrX$ be a pair of inputs with longest common subsequence (LCS) $\vec{z}^\star$ 
    and $\mu = p_y(\vec{x}, \base{f})$.
    Under this assumption, the probability $p_y(\pert{\vec{x}}; \base{f})$ satisfies the lower bound
    \begin{align*}
        p_y(\pert{\vec{x}}; \base{f}) \geq p_\del^{\abs{\pert{\vec{x}}} - \abs{\vec{x}}} \left( \mu - 1 + p_\del^{\abs{\vec{x}} - \abs{\vec{z}^\star}}
        - (1 - p_\del)^{H^\ast} p_\del^{\abs{\vec{z}^\star} - H^\ast} \right).
    \end{align*}

    Here, $H^\ast$ represents the minimal number of retained tokens required to satisfy
    \begin{align*}
        \sum_{i=0}^{H^\ast}
        \revbinom{\abs{\vec{z}^\star}}{p_\del}{i}
        \geq \mu - 1 + p_\del^{\abs{\vec{x}} - \abs{\vec{z}^\star}},
    \end{align*}
    where $\revbinom{n}{p}{k} \coloneq \binom{n}{k} (1 - p)^k p^{n - k}$ is the Binomial pmf for $n$ trials with
    with success probability $1 - p$.

    The difference from Theorem~5 of \citet{huang2023rsdel} is solely due to the term 
    $(1 - p_\del)^{H^\ast} p_\del^{\abs{\pert{\vec{x}}} - \abs{\vec{x}} + \abs{\vec{z}^\star} - H^\ast}$, 
    which is typically less than $10^{-7}$, negligible compared to the other terms.
\end{lemma}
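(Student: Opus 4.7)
The plan is to specialize Lemma~\ref{lem:var-del-cert-pairwise-lb} by setting $\fdel(\vec{x}) = \fdel(\pert{\vec{x}}) = p_\del$ and track how each component simplifies, isolating the sole source of discrepancy with \citet{huang2023rsdel}'s fixed-rate bound. Since $\fdel = \pert{\fdel}$, the inequality $\fdel \geq \pert{\fdel}$ holds (weakly), so $\bm{l} = \ind{\fdel < \pert{\fdel}} = 0$ and we land in the first branch of both $H^\ast$ and the summation limits. Consequently the definition of $H^\ast$ reduces exactly to the threshold stated in Lemma~\ref{lem:recover-rsdel}, and the summation range collapses from $i = 0$ to $H^\ast - 1$.

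Next I would simplify the prefactor and the inner expression. With $\fdel = \pert{\fdel} = p_\del$, the prefactor $\pert{\fdel}^{\abs{\pert{\vec{x}}} - \abs{\vec{z}^\star}} / \fdel^{\abs{\vec{x}} - \abs{\vec{z}^\star}}$ collapses to $p_\del^{\abs{\pert{\vec{x}}} - \abs{\vec{x}}}$, and $\revbinom{\abs{\vec{z}^\star}}{\pert{\fdel}}{k}$ coincides with $\revbinom{\abs{\vec{z}^\star}}{\fdel}{k}$ for every $k$. Temporarily \emph{ignoring} the gridded floor, the ratio $c/\revbinom{\abs{\vec{z}^\star}}{\fdel}{H^\ast}$ multiplied by $\revbinom{\abs{\vec{z}^\star}}{\pert{\fdel}}{H^\ast}$ equals $c$ exactly, so the inner expression telescopes to
\begin{equation*}
    \sum_{i=0}^{H^\ast - 1} \revbinom{\abs{\vec{z}^\star}}{p_\del}{i} + c \;=\; \mu - 1 + p_\del^{\abs{\vec{x}} - \abs{\vec{z}^\star}},
\end{equation*}
by the definition of $c$. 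This recovers Theorem~5 of \citet{huang2023rsdel} with no correction.

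The remaining step is to quantify the effect of the gridded floor. Since $\floor{x}_v = \floor{x/v}v \geq x - v$, applying it with $v = \binom{\abs{\vec{z}^\star}}{H^\ast}^{-1}$ reduces the inner expression by at most $\revbinom{\abs{\vec{z}^\star}}{p_\del}{H^\ast} \cdot \binom{\abs{\vec{z}^\star}}{H^\ast}^{-1} = (1 - p_\del)^{H^\ast} p_\del^{\abs{\vec{z}^\star} - H^\ast}$. Multiplying by the prefactor $p_\del^{\abs{\pert{\vec{x}}} - \abs{\vec{x}}}$ yields the claimed error term $(1 - p_\del)^{H^\ast} p_\del^{\abs{\pert{\vec{x}}} - \abs{\vec{x}} + \abs{\vec{z}^\star} - H^\ast}$, which inherits an exponentially small factor from $(1 - p_\del)^{H^\ast}$ (in typical regimes with moderate $p_\del$ and non-trivial $H^\ast$, this falls below $10^{-7}$). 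Assembling the prefactor, the telescoped sum, and the flooring deficit produces precisely the bound stated in the lemma.

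The main obstacle is the bookkeeping around the flooring: one must verify that no other term in Lemma~\ref{lem:var-del-cert-pairwise-lb} becomes non-trivial when $\fdel = \pert{\fdel}$ (so that $\rho(\fdel,\pert{\fdel}) = 1$ and the knapsack's value-per-weight ratios all coincide), and that the gridded floor is the unique source of looseness compared to the closed-form fixed-rate bound; every other simplification is a direct substitution.
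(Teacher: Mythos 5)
Your proof is correct and follows essentially the same route as the paper: substitute $\fdel = \pert{\fdel} = p_\del$ to land in the $\bm{l} = 0$ branch, observe the prefactor collapses to $p_\del^{\abs{\pert{\vec{x}}} - \abs{\vec{x}}}$, telescope the summation against $c$, and then bound the loss from the gridded floor by one grid step $\binom{\abs{\vec{z}^\star}}{H^\ast}^{-1}$ scaled by $\revbinom{\abs{\vec{z}^\star}}{p_\del}{H^\ast}$, which is precisely the inequality $\floor{a}_v \geq a - v$ the paper invokes. The bookkeeping you flag as the main obstacle is in fact benign for exactly the reason you state: $\rho(p_\del, p_\del) = 1$ kills all the $\rho$-dependent structure, and the two Binomial pmfs coincide, leaving the floor as the sole source of slack.
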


\begin{proof}
    As $\fdel = \pert{\fdel} = p_\del$,
    \begin{align*}
        p_y(\vec{\pert{x}}; \base{f})
         & \geq \frac{\pert{\fdel}^{\abs{\pert{\vec{x}}} - \abs{\vec{z}^\star}} }{\fdel^{\abs{\vec{x}} - \abs{\vec{z}^\star}}}
        \Bigg\{
        \sum_{i=\bm{l} (H^\ast-1)}^{(1 - \bm{l}) (H^\ast-1) + \bm{l} \abs{\vec{z}^\star}}
        \revbinom{\abs{\vec{z^\star}}}{\pert{\fdel}}{i}                                                                \\
         & \phantom{\geq \Bigg\{ } + \revbinom{\abs{\vec{z^\star}}}{\pert{\fdel}}{H^\ast}
        \floor*{
            \frac{
                \mu - 1 + \fdel^{\abs{\vec{x}} - \abs{\vec{z}^\star}}
                - \sum_{i=\bm{l} (H^\ast-1)}^{(1 - \bm{l}) (H^\ast-1) + \bm{l} \abs{\vec{z}^\star}}
                \revbinom{\abs{\vec{z^\star}}}{ \fdel}{j}
            }{
                \revbinom{\abs{\vec{z^\star}}}{ \fdel}{H^\ast}
            }
        }_{\binom{\abs{\vec{z^\star}}}{H^\ast}^{-1}}
        \Bigg\}                                                                                                                               \\
         & = p_\del^{\abs{\pert{\vec{x}}} - \abs{\vec{x}}} \Bigg\{
        \sum_{i=0}^{H^\ast - 1}
        \revbinom{\abs{\vec{z^\star}}}{p_\del}{i}                                                                                             \\
         & \phantom{\geq \Bigg\{ } + \revbinom{\abs{\vec{z^\star}}}{p_\del}{H^\ast}
         \floor*{
            \frac{
                \mu - 1 + p_\del^{\abs{\vec{x}} - \abs{\vec{z}^\star}} -
                \sum_{i=0}^{H^\ast - 1}
                \revbinom{\abs{\vec{z^\star}}
                }{
                    p_\del}{j}}{\revbinom{\abs{\vec{z^\star}}}{ p_\del}{H^\ast}}
        }_{\binom{\abs{\vec{z^\star}}}{H^\ast}^{-1}}  \Bigg\}                                          \\
         & \geq p_\del^{\abs{\pert{\vec{x}}} - \abs{\vec{x}}} \Bigg\{
        \sum_{i=0}^{H^\ast - 1}
        \revbinom{\abs{\vec{z^\star}}}{p_\del}{i}                                                                                             \\
         & \phantom{\geq \Bigg\{ } + \revbinom{\abs{\vec{z^\star}}}{p_\del}{H^\ast} 
            \left(\frac{
                \mu - 1 + p_\del^{\abs{\vec{x}} - \abs{\vec{z}^\star}} -
                \sum_{i=0}^{H^\ast - 1}
                \revbinom{\abs{\vec{z^\star}}
                }{
                    p_\del}{j}}{\revbinom{\abs{\vec{z^\star}}}{ p_\del}{H^\ast}} - \binom{\abs{\vec{z^\star}}}{H^\ast}^{-1}\right)
         \Bigg\}                                          \\
         & = p_\del^{\abs{\pert{\vec{x}}} - \abs{\vec{x}}}
         \Bigg\{\mu - 1 + p_\del^{\abs{\vec{x}} - \abs{\vec{z}^\star}} \Bigg\}
         - (1 - p_\del)^{H^\ast} p_\del^{\abs{\pert{\vec{x}}} - \abs{\vec{x}} + \abs{\vec{z}^\star} - H^\ast}
    \end{align*}
    The first equality holds because $\bm{l}$ evaluates to $0$.
    The last inequality holds because $\floor{a} \geq a - 1$, which applies to the gridded flooring operation.

    This result closely matches Theorem~5 of \citet{huang2023rsdel}, differing only by 
    the term $(1 - p_\del)^{H^\ast} p_\del^{\abs{\pert{\vec{x}}} - \abs{\vec{x}} + \abs{\vec{z}^\star} - H^\ast}$, 
    which is typically less than $10^{-7}$, negligible compared to the other terms.
\end{proof}

\section{Certification for Arbitrary Deletion Probability Functions} \label{app-sec:brute-force-certify}

In Lemmas~\ref{lem:var-del-cert-pairwise-lb} and~\ref{lem:var-del-cert-pairwise-ub}, we obtained bounds on 
the smoothed class probabilities at a neighboring input $\pert{\vec{x}}$ to some query input $\vec{x}$. 
These bounds can be used directly for edit distance certification, by exhaustively enumerating over all neighboring 
inputs $\pert{\vec{x}}$ that are within edit distance $\radius$ from $\vec{x}$. 
Algorithm~\ref{alg:certify} details how this can be done. 
We note that this algorithm provides a significant advantage over naively certifying the smoothed model. 
A naive approach would require running a full Monte Carlo estimation for every neighbor $\pert{\vec{x}}$, whereas our algorithm performs the costly estimation only once for the original input $\vec{x}$. 
The certification check for each neighbor then relies on the analytical bounds, which are computationally inexpensive.

\begin{algorithm}[H]
    \caption{\textsc{CertifyGeneral}}
    \label{alg:certify}
    \begin{algorithmic}[1]
        \REQUIRE
        $\base{f}$: base classifier,
        $\vec{x}$: input sequence,
        $y_1$: predicted class,
        $\fdel$: length-dependent deletion probability function,
        $\opset$: allowed edit operations,
        $\alpha$: significance level
        \ENSURE $\radius$: maximum radius that can be certified
        \STATE $t_1^\lb \gets \hat{p}^\lb_{y_1}(\vec{x}; \base{f}, \phi_{\fdel}, \alpha)$
        \STATE $t_2^\ub \gets \max_{y \neq y_1 } \hat{p}^\ub_y(\vec{x}; \base{f}, \phi_{\fdel}, \alpha)$
        \FOR{$\radius = 0$ to $\infty$}
            \FORALL{$\pert{\vec{x}} \in B_{\radius+1}(\vec{x}; \opset)$} \label{algline:neigh}
                \STATE $\pert{t}_1^{\lb} \gets \lb(t_1^\lb, \vec{x}, \pert{\vec{x}}, \fdel)$
                \STATE $\pert{t}_2^{\ub} \gets \ub(t_2^\ub, \vec{x}, \pert{\vec{x}}, \fdel)$
                \IF{$\pert{t}_1^{\lb} \leq \pert{t}_2^{\ub}$}
                    \RETURN \radius
                \ENDIF
            \ENDFOR
        \ENDFOR
    \end{algorithmic}
\end{algorithm}

\section{\optdel\ Details and Intuition}
\label{app-sec:opt-del}

In this section, we provide details and intuition behind the optimization process for \optdel.
The core idea is to calibrate the deletion mechanism by finding optimal expected lengths for each input bin,
ensuring that certified robustness is maximized while maintaining a minimum certified accuracy threshold.

\paragraph{Create Equal Width Bins}
Algorithm~\ref{alg:dynamic_bins} is used to create bins based on the lengths of the input sequences, ensuring that each bin contains at least a minimum number of samples.
The algorithm first computes the lengths of all sequences in the dataset and trims extreme outliers based on a specified percentage.
It then iteratively reduces the target number of bins, recalculating a new set of equally spaced boundaries in each 
step, until all bins contain at least the minimum number of samples.
The result is a set of bin boundaries that divides the dataset into equal-width intervals,
tailored to the data distribution while avoiding sparsity issues.
We use these crude bin boundaries with some manual calibration.
We report the exact bin boundaries used in Appendix~\ref{app-sec:eval-setup}.

\begin{algorithm}
    \caption{\textsc{CreateDynamicEqualWidthBins}}
    \label{alg:dynamic_bins}
    \begin{algorithmic}[1]
        \REQUIRE
        Dataset $\mathbb{D}$,
        Threshold $C$,
        Outlier percentage $\alpha$ (default: $1\%$)
        \ENSURE
        Bin boundaries $B$
        \STATE Compute input lengths for all samples in $\mathbb{D}$
        \STATE Trim lengths outside $\alpha$ and $100-\alpha$ percentiles
        \STATE Initialize $k \gets$ estimated number of bins, $B_k \gets$ equally spaced boundaries
        \WHILE{True}
        \STATE Assign data to bins defined by $B_k$
        \IF{all bins have at least $C$ samples}
        \STATE \textbf{Break}
        \ENDIF
        \STATE $k \gets k - 1$
        \STATE Update $B \gets$ equally spaced boundaries for $k$ bins
        \IF{$k < 1$}
        \STATE \textbf{Raise} error: Cannot satisfy threshold
        \ENDIF
        \ENDWHILE
        \STATE \textbf{Return} $B_k$
    \end{algorithmic}
\end{algorithm}

\paragraph{Optimizing Certified Radius}
Algorithm~\ref{alg:compute_max_radius} describes the process of determining the maximum certified radius (\(\radius\)) that meets a specific certified accuracy threshold (\(\tau\)).
This is achieved by iteratively calculating the certified accuracy for each radius using the dataset and checking whether it satisfies the threshold. The objective is to return the largest \(\radius\) and the corresponding certified accuracy (\(\certacc_r\)) for that level of robustness.
However, due to the discrete nature of the objective, the optimization surface is non-smooth, making direct optimization challenging.
To address this, we also calculate the certified accuracy, which provides a smoother metric helpful for hyperparameter tuning.

\begin{algorithm}
    \caption{\textsc{MaxCertRadius}}
    \label{alg:compute_max_radius}
    \begin{algorithmic}[1]
        \REQUIRE
        $\base{f}$: Base model,
        $k$: Expected lengths after deletion,
        $\mathbb{D}$: Dataset,
        $\tau$: Certified accuracy threshold,
        $\opset$: allowed edit operations
        \ENSURE Maximum certified radius $\radius$, Certified accuracy $\certacc_r$
        \STATE Define deletion function $\fdel(\vec{x}) \coloneq 1 - \frac{k}{\abs{\vec{x}}}$
        \STATE $f_{\CR}(\vec{x}) \coloneq \textsc{Certify}(\base{f}, \vec{x}, \fdel, \opset)$
        \FOR{$\radius = 0$ to $\infty$}
        \STATE $\certacc_r \gets \sum_{(\vec{x}, y) \in \mathbb{D}}\frac{\ind{f(\vec{x}) = y}\ind{f_{\CR}(\vec{x}) \geq r}}{\abs{\mathbb{D}}} $
        \IF{$\certacc_r < \tau$}
        \STATE \textbf{return} $\radius - 1, \certacc_{\radius - 1}$
        \ENDIF
        \ENDFOR
    \end{algorithmic}
\end{algorithm}

\paragraph{Finding Optimal Expected Lengths}
The \optdel\ mechanism employs Algorithm~\ref{alg:optimize_expected_length} to optimize the expected lengths for each input bin.
The input data is divided into bins based on sequence lengths,
and for each bin, we identify the expected length after deletion that maximizes the certified radius and accuracy while adhering to the minimum accuracy threshold.
The relationship between the expected length and the optimization objectives is approximately unimodal, allowing the use of efficient search methods.

We adopt golden section search to minimize the number of queries required for optimization.
This choice is particularly advantageous as computing certified radii and accuracies involves expensive stochastic sampling from randomized smoothed models.
Golden section search strikes a balance between accuracy and computational efficiency, making it well-suited for this problem. 
The asymptotic complexity of this calibration process is linear in both the number of bins, $n$, and the sample size 
per bin, $m$. 
This results from the main loop iterating through each of the $n$ bins, while the optimization search performed within 
each bin has a cost that is linear in the $m$ samples being evaluated.

\begin{algorithm}
    \caption{\textsc{OptimizeExpectedLength}}
    \label{alg:optimize_expected_length}
    \begin{algorithmic}[1]
        \REQUIRE
        $\base{f}$: base model,
        $\mathbb{D}$: training dataset,
        $\{g_0, g_1, \dots, g_n\}$: bin intervals,
        $\tau$: certified accuracy threshold,
        $\text{tol}$: search tolerance,
        $m$: sample size per bin,
        $\opset$: allowed edit operations
        \ENSURE Optimized expected lengths $K$
        \STATE Initialize $K \gets [0, \dots, 0]$
        \FOR{$i \in \{1, \dots, n-1\}$}
        \STATE Sample $m$ data points from the interval\\
        \qquad $\mathbb{D}_i \gets \left\{ \vec{x} \mid \abs{\vec{x}} \in [g_i, g_{i+1}) \right\}$
        \STATE $\low \gets 0.01 \cdot g_{i+1}$, $\high \gets 0.3 \cdot g_i$
        \WHILE{$\high - \low > \text{tol}$} %
        \STATE $m_1 \gets \low + (3 - \sqrt{5})/2 \cdot (\high - \low)$
        \STATE $m_2 \gets \high - (3 - \sqrt{5})/2 \cdot (\high - \low)$
        \STATE $r_1, \mathrm{CA}_1 \gets \textsc{MaxCertRadius}(\base{f}, m_1, \mathbb{D}, \tau, \opset)$
        \STATE $r_2, \mathrm{CA}_2 \gets \textsc{MaxCertRadius}(\base{f}, m_2, \mathbb{D}, \tau, \opset)$
        \IF{($r_1 > r_2$) \OR ($r_1 = r_2$ \AND $\mathrm{CA}_1 > \mathrm{CA}_2$)}
        \STATE $\high \gets m_2$
        \ELSE
        \STATE $\low \gets m_1$
        \ENDIF
        \ENDWHILE
        \STATE $K[i] \gets (\high + \low) / 2$
        \ENDFOR
        \STATE \textbf{return} Optimized expected lengths $K$
    \end{algorithmic}
\end{algorithm}

\paragraph{Stochastic Challenges}
Despite its efficiency, the optimization process can be sensitive to the stochastic nature of randomized smoothing models and potential biases in binned data.
This occasionally results in optimized expected lengths (\(k\)) that are not strictly increasing with bin lengths, which can slightly affect the smoothness of the calibration.
To mitigate this, additional regularization or smoothing steps may be considered during the optimization process.

The resulting optimized expected lengths (\(K\)) are then used to parameterize the deletion function \(\fdel(\vec{x})\), as defined in Section~\ref{sec:method}:
\begin{equation*}
    \fdel(\vec{x}) = 1 - \frac{K_{g(\vec{x})}}{\abs{\vec{x}}}
\end{equation*}
where $g(\vec{x})$ is the index of the bin corresponding to the input $\vec{x}$.
This adaptive approach ensures robust performance across varying input lengths and scenarios.

\section{Evaluation Setup}
\label{app-sec:eval-setup}
We describe the detailed evaluation setup in this section.

\subsection{Dataset Specification}
\begin{table}[h]
    \begin{center}
        \small
        \begin{tabular}{lrrrr}
            \toprule
                      &            & \multicolumn{3}{c}{\bfseries Number of Instances}                     \\
            \cmidrule(lr){3-5}
            \textbf{Dataset} & \textbf{Avg.\ Words} & Train                                   & Valid   & Test$^\ast$   \\
            \midrule
            \yelp     & 134.1      & 585\,000                                & 65\,000 & 10\,000 \\
            \assassin & 228.2      & 2\,152                                  & 239     & 2\,378  \\
            \imdb     & 231.2      & 22\,500                                 & 2\,500  & 25\,000 \\
            \lun      & 269.9      & 13\,416                                 & 1\,490  & 6\,454  \\
            \bottomrule
        \end{tabular}
    \end{center}
    \caption{
        Summary of datasets.
        ``Avg. Words'' denotes the average number of words per instance in the dataset.\\
        ${}^\ast$The full test set is used for evaluation for all datasets except for \yelp,
        where we sample $10\,000$ instances from the available $50\,000$ instances.
    }
    \label{tbl:data-summary}
\end{table}
We collect all data from HuggingFace Datasets\footnote{\url{https://github.com/huggingface/datasets}} and
the \texttt{AdvBench} repository\footnote{\url{https://github.com/thunlp/Advbench}}~\citep{chen2022adversarial}.

\subsection{Parameter Settings}
\label{app-sec:parameters}

\begin{table*}[h]
    \begin{center}
        \small
        \begin{tabular}{cll}
            \toprule
                                                 & \textbf{Parameter} & \textbf{Values}                                             \\\midrule
            \multirow{2}{*}{\textbf{Base model}} & Model              & \texttt{AutoModelForSequenceClassification("roberta-base")} \\
                                                 & Tokenizer          & \texttt{AutoTokenizer("roberta-base")}                      \\\midrule
            \multirow{2}{*}{\textbf{Scheduler}}  & Python command     & \texttt{transformers.get\_linear\_schedule\_with\_warmup}   \\
                                                 & Warmup epochs      & \texttt{10}                                                 \\\midrule
            \multirow{4}{*}{\textbf{Optimizer}}  & Python class       & \texttt{torch.optim.AdamW}                                  \\
                                                 & Learning rate      & \texttt{2.0E-5}                                             \\
                                                 & Weight decay       & \texttt{1.0E-6}                                             \\
                                                 & Gradient clipping  & \texttt{clip\_grad\_norm\_(model.parameters(), 1.0)}        \\\midrule
            \multirow{4}{*}{\textbf{Training}}   & Batch size         & \texttt{32}                                                 \\
                                                 & Max.\ epoch        & \texttt{200}                                                \\
                                                 & Early stopping     & No improvement in validation loss after 25 epochs           \\\bottomrule
        \end{tabular}
    \end{center}
    \caption{
        Parameter settings for RoBERTa, the optimizer and training procedure.
        Parameter settings are consistent across all models (\ns, \randel, \ranmask, \vardel, \optdel) except where specified.
    }
    \label{tbl:model-parameters}
\end{table*}

For all randomized smoothing mechanisms, perturbations are applied at the word level. 
Specifically, an input text is first split into a sequence of words using white-space tokenization. 
The smoothing mechanism then deletes words from this sequence. 
The resulting sequence of words is joined back into a string, which is then passed to the base RoBERTa model's 
tokenizer for processing into subword tokens. 
We fine-tune the base RoBERTa model on these perturbed inputs using the respective training split for each dataset.
The default parameter configurations for our experiments are provided in Table~\ref{tbl:model-parameters}.
We avoid individually calibrating the optimizer or training schedule for each model,
as the default settings demonstrate robust performance across all datasets.
To approximate the smoothed models (\ranmask, \randel, \vardel, \optdel),
we rely on Monte Carlo sampling with 1000 perturbed inputs for prediction and 4000 for certification, using a significance level of 0.05.

\subsection{\optdel\ Setup}
To train and calibrate \optdel, we use the parameter settings provided in Table~\ref{tbl:optdel-parameters}.
Since the optimized expected lengths are not available during training, we use a random deletion probability ranging from $70\%$ to $99\%$.
Empirically, this strategy proved effective in producing a stable smoothed classifier.
The certified accuracy threshold ($\tau$) is set to $40\%$ for the \yelp\ dataset and $75\%$ for the \imdb, \assassin, and \lun\ datasets.
Bins are created using the entire training set (excluding validation samples),
and manual deletion of boundaries are made if the number of bins is excessive.
For optimizing the expected lengths, we use Monte Carlo sampling with a prediction size of 32 and a certification size of 256.
The specific number of samples per bin used for optimization is outlined in Table~\ref{tbl:optdel-parameters}.
Finally, we do not retrain the base model after determining the optimal deletion rates.

\begin{table*}
    \begin{center}
        \small
        \begin{tabular}{lll}
            \toprule
            \textbf{Dataset}           & \textbf{Parameter}                  & \textbf{Values}                                   \\\midrule
            \multirow{5}{*}{\yelp}     & \multirow{2}{*}{Bin boundaries $B$} & $[0, 28, 82, 135, 189, 243, 296, 350, 404,$       \\
                                       &                                     & $457, 511, 565, 619, 672, 726, 780, 887, \infty]$ \\\cmidrule(lr){2-3}
                                       & Threshold $\tau$                    & 0.40                                              \\\cmidrule(lr){2-3}
                                       & Sample size per bin                 & 200                                               \\\midrule
            \multirow{4}{*}{\imdb}     & \multirow{2}{*}{Bin boundaries $B$} & $[0, 72, 133, 194, 256, 318, 379, 440, 502,$      \\
                                       &                                     & $564, 625, 686, 748, 810, 871, 932, \infty]$      \\\cmidrule(lr){2-3}
                                       & Threshold $\tau$                    & 0.75                                              \\\cmidrule(lr){2-3}
                                       & Sample size per bin                 & 200                                               \\\midrule
            \multirow{4}{*}{\assassin} & Bin boundaries $B$                  & $[0, 137, 230, 324, \infty]$                      \\\cmidrule(lr){2-3}
                                       & Threshold $\tau$                    & 0.75                                              \\\cmidrule(lr){2-3}
                                       & Sample size per bin                 & 50                                                \\\midrule
            \multirow{4}{*}{\lun}      & Bin boundaries $B$                  & $[0, 85, 150, 216, 282, 347, \infty]$             \\\cmidrule(lr){2-3}
                                       & Threshold $\tau$                    & 0.75                                              \\\cmidrule(lr){2-3}
                                       & Sample size per bin                 & 200                                               \\\bottomrule
        \end{tabular}
    \end{center}
    \caption{
        Parameters used in optimizing \optdel.
    }
    \label{tbl:optdel-parameters}
\end{table*}

\section{Additional Experimental Results}
\label{app-sec:exp-extra}

We provide additional experimental results and details in this section.
We first present the values of $\fdel(\vec{x})$ and $k_{g(\vec{x})}$ for \vardel\ and \optdel\ in Appendix~\ref{app-sec:pdel-values}.
We then provide additional certified accuracy results, the quantile certified accuracy results for the \imdb\ and \lun\ datasets, 
and certified accuracy as a function of the certified radius for all datasets in Appendix~\ref{app-sec:other-certified-results}.
We also include an ablation study on the $p_\del$ values for \vardel\ and \optdel\ in Appendix~\ref{app-sec:abl-pdel}.
Finally, we conclude with a discussion on the computation complexity and cost of our methods in Appendix~\ref{app-sec:computation-complexity-cost}.

\subsection{Parameters for \vardel\ and \optdel}
\label{app-sec:pdel-values}

\begin{figure}[!htbp]
    \centering
    \subfigure[\vardel\ deletion rates $\fdel(\vec{x})$ vs input length]{
        \includegraphics[width=0.47\textwidth]{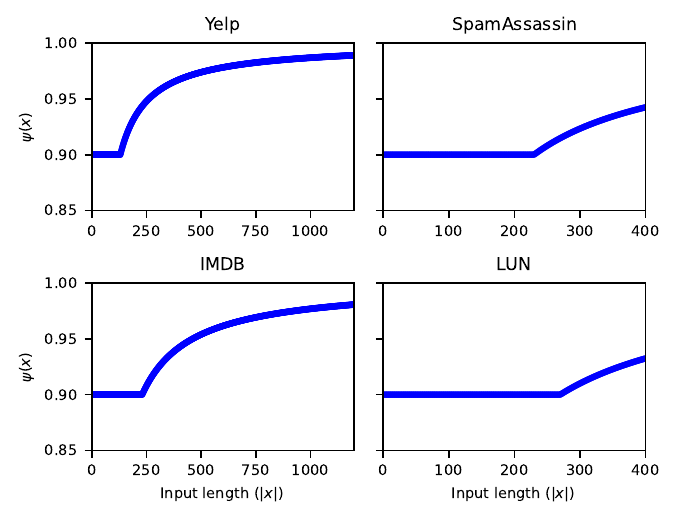}
    }
    \subfigure[\optdel\ deletion rates $\fdel(\vec{x})$ vs input length]{
        \includegraphics[width=0.47\textwidth]{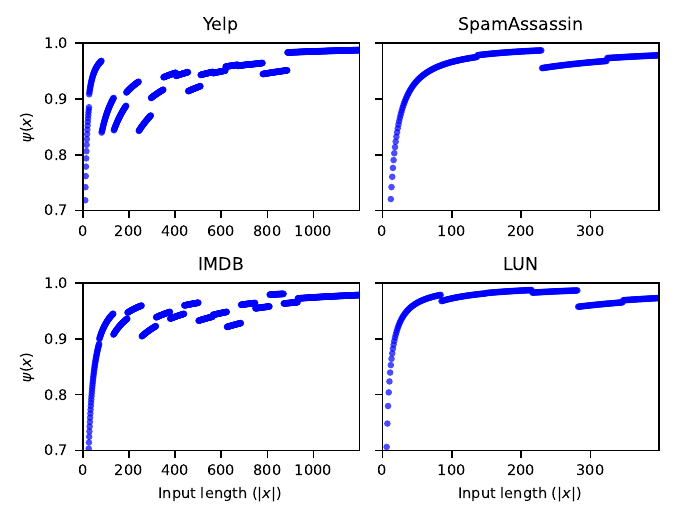}
    }
    \vspace{1em}
    \subfigure[\optdel\ retention rates $k_{g(\vec{x})}$ vs input length]{
        \includegraphics[width=0.6\textwidth]{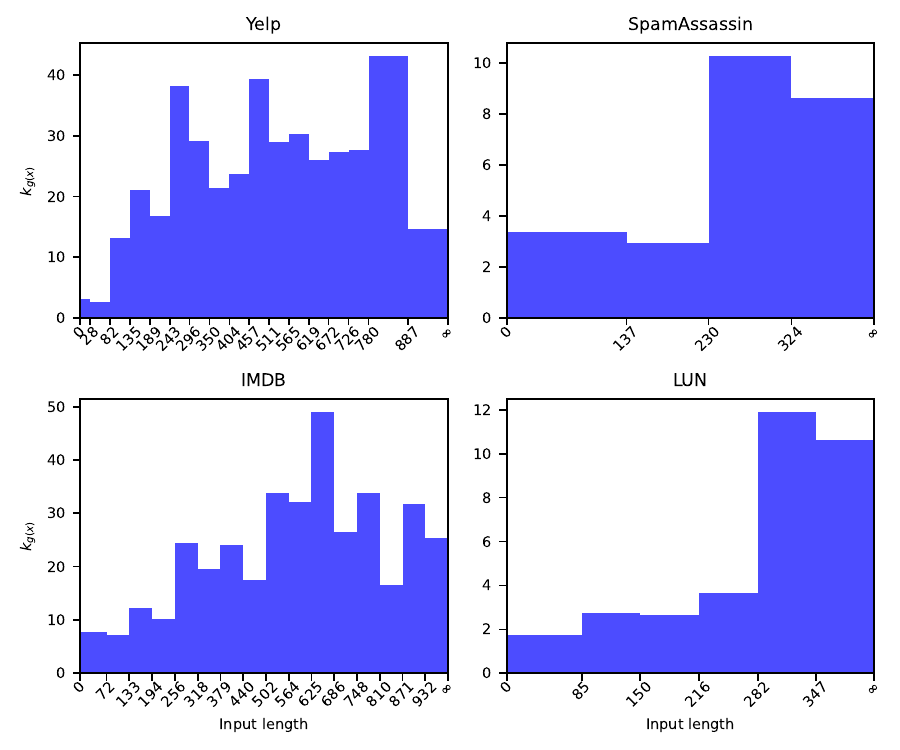}
    }
    \caption{Plots of the deletion rate $\fdel(\vec{x})$ for \vardel\ and \optdel\ and the retention length 
        $k_{g(\vec{x})}$ for \optdel.
        The top left plot shows the deletion rates $\fdel(\vec{x})$ for \vardel\ as a function of input length.
        The top right plot shows the deletion rates $\fdel(\vec{x})$ for \optdel\ as a function of input length.
        The bottom left plot shows the retention rates $k_{g(\vec{x})}$ for \optdel\ as a function of input length.
        The plots demonstrate that \vardel\ and \optdel\ adaptively adjust their deletion rates based on the input length.
    }
    \label{fig:deletion-rates}
\end{figure}

We present the values of $\fdel(\vec{x})$ and $k_{g(\vec{x})}$ for \vardel\ and \optdel\ in Figure~\ref{fig:deletion-rates}.
Unlike \vardel, \optdel\ does not enforce a strictly monotonically increasing $\fdel(\vec{x})$.
Instead, \optdel\ dynamically adjusts $\fdel(\vec{x})$ according to the characteristics of the dataset.
Future work could further investigate richer classes of $\fdel(\vec{x})$, potentially including regularization methods.
Our framework is specifically designed to allow such flexibility.

\subsection{Other Certified Accuracy Results}
\label{app-sec:other-certified-results}

\begin{table}[!ht]
    \begin{center}
        \small
        \begin{NiceTabular}{
            l
            l
            r
            r
            r
            r
            }
            \toprule
            \RowStyle{\bfseries} Dataset & Model & Clean Acc.\ & Mean CR & Wass.\ Dist.\ & Agg.\ log(CC) \\
            \midrule
            \multirow{5}{*}{\yelp} 
            & \ns            & 69.25 & ---    & --- & ---                                \\
            & \ranmask, 90\% & 57.15 & 0.61          \scriptsize{$\pm$ 0.013} & 0.31              & 0.00           \\
            & \randel, 90\%  & 58.57 & 0.75          \scriptsize{$\pm$ 0.013} & 0.26              & \textbf{7.16}  \\
            & \vardel, 90\%  & 56.98 & \textbf{0.99} \scriptsize{$\pm$ 0.023} & 0.28              & 6.89           \\
            & \optdel        & 56.22 & 0.77          \scriptsize{$\pm$ 0.016} & \textbf{0.24}     & 6.78           \\
            \midrule
            \multirow{5}{*}{\assassin} 
            & \ns            & 93.48 & ---   & --- & ---                                \\
            & \ranmask, 90\% & 86.87 & 2.33          \scriptsize{$\pm$ 0.012} & 0.43              & 14.02          \\
            & \randel, 90\%  & 88.26 & 2.35          \scriptsize{$\pm$ 0.012} & 0.42              & 14.49          \\
            & \vardel, 90\%  & 87.99 & \textbf{2.86} \scriptsize{$\pm$ 0.018} & \textbf{0.20}     & \textbf{14.77} \\
            & \optdel        & 88.06 & 2.77          \scriptsize{$\pm$ 0.016} & 0.24              & 14.01          \\
            \midrule
            \multirow{5}{*}{Yelp} 
            & \ns            & 98.02 & ---  & ---  & ---                               \\
            & \ranmask, 90\% & 97.65 & 5.09           \scriptsize{$\pm$ 0.031} & 0.58              & 37.49          \\
            & \randel, 90\%  & 97.81 & 5.03           \scriptsize{$\pm$ 0.031} & 0.57              & 38.66          \\
            & \vardel, 90\%  & 97.73 & 6.25           \scriptsize{$\pm$ 0.053} & 0.30              & 38.79          \\
            & \optdel        & 96.93 & \textbf{12.35} \scriptsize{$\pm$ 0.152} & \textbf{0.14}     & \textbf{72.74} \\
            \midrule
            \multirow{5}{*}{\lun} 
            & \ns            & 99.16 & ---  & ---  & ---                               \\
            & \ranmask, 90\% & 97.91 & 4.83           \scriptsize{$\pm$ 0.022} & 0.31              & 34.51          \\
            & \randel, 90\%  & 98.20 & 4.94           \scriptsize{$\pm$ 0.019} & 0.34              & 37.68          \\
            & \vardel, 90\%  & 97.88 & 5.65           \scriptsize{$\pm$ 0.028} & 0.30              & 38.90          \\
            & \optdel        & 95.09 & \textbf{10.69} \scriptsize{$\pm$ 0.088} & \textbf{0.24}     & \textbf{66.90} \\
            \bottomrule
        \end{NiceTabular}
    \end{center}
    \caption{
        Comparison of robustness certificates across models and datasets. 
        All metrics are computed on the full test set.
        ``Mean CR'' refers to the mean certified radius, reported with standard error. 
        A larger standard error indicates greater variability in certified radii across examples.
        We report the Wasserstein distance (``Wass. Dist.'') between the standardized distributions of certified 
        radius and input length as a measure of how well each method adapts to variations in input length 
        (smaller is better).
        The right-most column reports the median log-cardinality of the certified region.
        For the \yelp\ dataset, the first quartile (Q1) of $\log(\text{CC})$ is reported instead, since the median 
        certified region cardinality is zero.
        Despite slight degradation in clean accuracy, both \vardel\ and \optdel\ significantly outperform \ranmask\ 
        and \randel\ in terms of both median and mean robustness.
    }
    \label{tbl:cr-statistics-small}
\end{table}

As illustrated in Table~\ref{tbl:cr-statistics-small},
both \vardel\ and \optdel\ maintain competitive clean accuracy
while significantly enhancing robustness guarantees compared to \ranmask\ and \randel.
In particular, they improved the mean certified radius by an average of over $50\%$,
and the median cardinality of the certified region by up to \emph{30 orders of magnitude}.
This substantial expansion highlights the effectiveness of adaptive deletion in certifying robustness
against a significantly broader space of perturbations,
making it particularly valuable in adversarial settings where sequence manipulations are common.

We report standard deviation (normalised as standard error) in Table~\ref{tbl:cr-statistics-small},
but note that it does not directly imply statistical significance due to input length being a key variable.
Instead, we report the Wasserstein distance to capture how well each method adapts to variations in input length.
Our adaptive methods achieve a lower Wasserstein distance, indicating a stronger alignment between the distribution 
of certified radii and the distribution of input lengths. 
This successful adaptation is also reflected in a higher variability (SE) of the certified radius, as the method 
appropriately assigns different radii to inputs of different lengths. 

\begin{figure*}[!htbp]
    \centering
    \subfigure[\imdb\ dataset]{
        \includegraphics[width=0.45\linewidth]{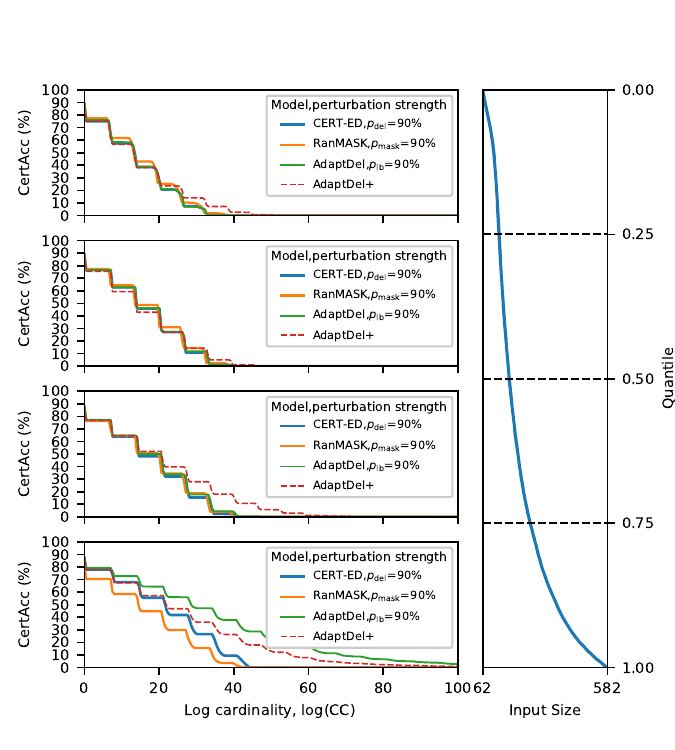}
    }
    ~
    \subfigure[\lun\ dataset]{
        \includegraphics[width=0.45\linewidth]{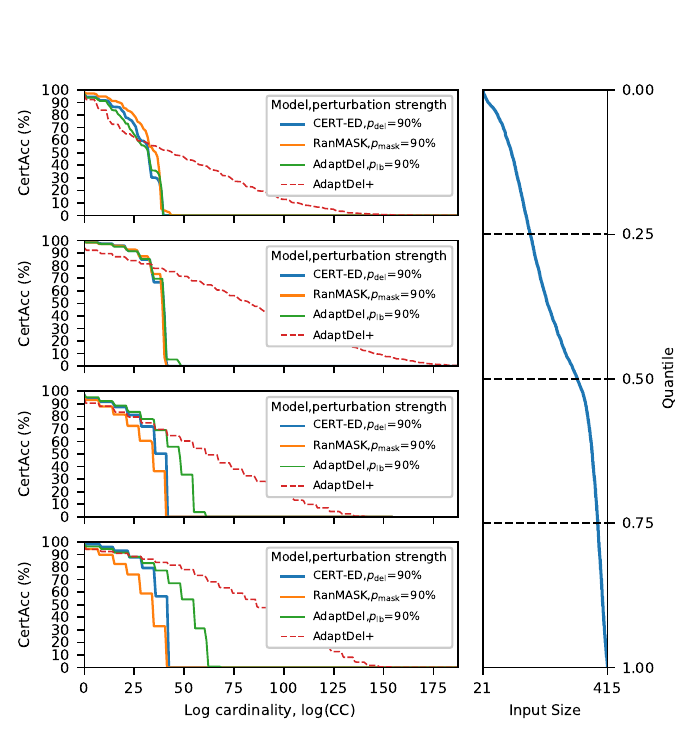}
    }
    \caption{
        Certified accuracy as a function of the log-cardinality of the certified region, grouped by quartile of input size.
        The subfigure on the right displays the quantile by input size,
        with the dashed lines indicating the quartiles corresponding to the certified accuracy plots on the left.
        Each set of axes (top to bottom) corresponds to a split of the test set on the length-based quartiles (smallest to largest).
        For example, the second plot from top to bottom shows the certified accuracies of examples with in Q1 to Q2.
        The results demonstrate that the methods scale effectively across varying input sizes,
        with higher certified accuracy achieved for larger input sizes and higher log-cardinality regions.
    }
    \label{fig:quantile_certified_accuracy_extra}
\end{figure*}

Figure~\ref{fig:quantile_certified_accuracy_extra} extends our analysis to the \imdb\ and \lun\ datasets,
reinforcing the trends observed in Figure~\ref{fig:quantile_certified_accuracy}.
In both datasets, certified accuracy improves with input size and log-cardinality,
with adaptive deletion methods (\vardel, \optdel) outperforming \ranmask\ and \randel, especially for longer sequences.
For \imdb, since more than $50\%$ of the length variation is concentrated in the last quartile, we observe a significant improvement in robustness in this region.
In contrast, performance in the other three quartiles is more mixed,
aligning with our previous observations on the \yelp\ dataset in Figure~\ref{fig:quantile_certified_accuracy}.
For the \lun\ dataset, interestingly, \optdel\ underperforms in the first quartile, which may be due to the instability of the \optdel\ optimization procedure, as discussed in Appendix~\ref{app-sec:opt-del}.
This suggests potential areas for improvement in \optdel.
However, its performance improves gradually with increasing input length,
similar to the trend observed in \assassin, where both \vardel\ and \optdel\ achieve significantly stronger robustness for longer sequences.
These results further confirm the scalability of our approach across varying text distributions and input sizes.

\begin{figure}[!htbp]
    \centering
    \subfigure[\yelp\ dataset]{
        \includegraphics[width=0.4\linewidth]{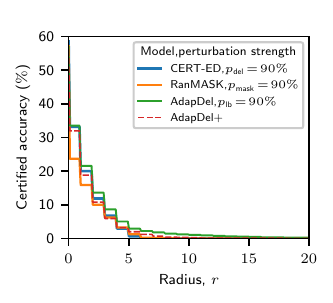}
    }
    \subfigure[\imdb\ dataset]{
        \includegraphics[width=0.4\linewidth]{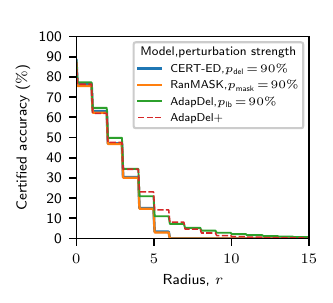}
    }
    \subfigure[\assassin\ dataset]{
        \includegraphics[width=0.4\linewidth]{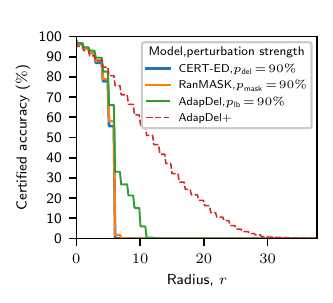}
    }
    \subfigure[\lun\ dataset]{
        \includegraphics[width=0.4\linewidth]{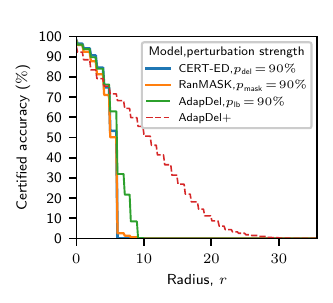}
    }
    \caption{
        Certified accuracy for all methods as a function of the certified radius.
        While \vardel\ consistently outperforms \ranmask\ and \randel\ across all radii,
        \optdel\ achieves the highest certified accuracy for larger certified regions.
        Note that, \randel, \vardel, and \optdel\ certifies Leveshtein distance perturbations,
        while \ranmask\ only certifies Hamming distance perturbations.
        Thus, the actual robustness of \ranmask\ at the same radii is lower than that of \randel, \vardel\ and \optdel.
    }
    \label{fig:certified_accuracy_radius}
\end{figure}

Certified accuracy cannot be summarized by a single number without losing critical information about robustness under varying radii.
To address this, we provide Figure~\ref{fig:certified_accuracy_radius},
which plots certified accuracy as a function of the certification radius, as per convention.
Note that this comparison disadvantages \vardel, \optdel, and \randel,
as \ranmask\ operates under the more constrained Hamming distance threat model.
Despite this, our methods consistently outperform others, following a similar pattern to the certified accuracy vs. log-cardinality plot.
This further supports our use of log-cardinality certified accuracy as a balanced summary metric that does not unfairly favor any specific method.

\subsection{Ablation Study on Deletion Rate Lower Bound for \vardel}
\label{app-sec:abl-pdel}

\begin{table}[!htbp]
    \begin{center}
        \small
        \begin{NiceTabular}{
            lcc
            r
            r@{\small${} \pm {}$}>{\scriptsize}l
            r
            r@{\hskip 5.5pt}
            }
            \toprule
            \RowStyle{\bfseries} Dataset & $\bm{p}_\lb$ & $\bm{k}$ & Clean Acc. & \multicolumn{2}{c}{Mean CR} & Wass.\ Dist. & Median log(CC) \\
            \midrule
            \multirow{6}{*}{\yelp} 
            & 50\%    & 65  & 67.89 & 0.16                                         & 0.006 & 0.44              & 0.00  \\
            & 60\%    & 52  & 66.84 & 0.32                                         & 0.007 & 0.34              & 0.00  \\
            & 70\%    & 39  & 65.33 & 0.41                                         & 0.009 & 0.31              & 6.32  \\
            & 80\%    & 26  & 63.13 & 0.61                                         & 0.013 & 0.27              & 6.60  \\
            & 90\%    & 13  & 56.98 & 0.99                                         & 0.023 & 0.28              & 6.89  \\
            & 95\%    & 6   & 48.40 & 1.41                                         & 0.037 & 0.32              & 6.92  \\
            \midrule
            \multirow{6}{*}{\imdb} 
            & 50\%    & 115 & 94.64 & 0.34                                         & 0.005 & 0.29              & 0.00  \\
            & 60\%    & 92  & 93.15 & 0.86                                         & 0.005 & 0.39              & 6.83  \\
            & 70\%    & 69  & 91.29 & 1.01                                         & 0.007 & 0.29              & 6.83  \\
            & 80\%    & 46  & 90.59 & 1.61                                         & 0.010 & 0.25              & 12.51 \\
            & 90\%    & 23  & 87.99 & 2.86                                         & 0.018 & 0.20              & 14.77 \\
            & 95\%    & 11  & 85.25 & 4.26                                         & 0.028 & 0.16              & 20.87 \\
            \midrule
            \multirow{6}{*}{\assassin} 
            & 50\%    & 115 & 98.15 & 0.53                                         & 0.013 & 0.45              & 0.00  \\
            & 60\%    & 92  & 98.19 & 1.25                                         & 0.011 & 0.45              & 7.02  \\
            & 70\%    & 69  & 97.98 & 1.61                                         & 0.019 & 0.34              & 7.02  \\
            & 80\%    & 46  & 97.98 & 3.20                                         & 0.024 & 0.36              & 20.22 \\
            & 90\%    & 23  & 97.73 & 6.25                                         & 0.053 & 0.30              & 38.79 \\
            & 95\%    & 11  & 97.69 & 11.58                                        & 0.111 & 0.24              & 69.84 \\
            \midrule
            \multirow{6}{*}{\lun} 
            & 50\%    & 135 & 99.32 & 0.53                                         & 0.006 & 0.34              & 7.18  \\
            & 60\%    & 108 & 99.36 & 1.25                                         & 0.006 & 0.67              & 7.18  \\
            & 70\%    & 81  & 99.10 & 1.55                                         & 0.009 & 0.42              & 7.30  \\
            & 80\%    & 54  & 98.74 & 2.96                                         & 0.012 & 0.36              & 20.51 \\
            & 90\%    & 27  & 97.88 & 5.65                                         & 0.028 & 0.30              & 38.90 \\
            & 95\%    & 13  & 96.19 & 9.64                                         & 0.061 & 0.21              & 65.74 \\
            \bottomrule
        \end{NiceTabular}
    \end{center}
    \caption{
        Ablation study on the deletion probability lower bound $p_\lb$ for \vardel, with 
        $k = \floor{(1 - p_\lb) \E\{\abs{\vec{x}}\}}$.
        All metrics are computed on the full test set.
        ``Mean CR'' refers to the mean certified radius, reported with standard error.
        A larger standard error indicates greater variability in certified radii across examples.
        We report the Wasserstein distance (``Wass. Dist.'') between the standardized distributions of certified radius 
        and input length, as a measure of how well each method adapts to variations in input length (smaller is 
        better).
        The right-most column contains the median log-cardinality ($\log(\text{CC})$) of the certified region.
        For the \yelp\ dataset, the first quartile (Q1) of $\log(\text{CC})$ is reported instead, since the median 
        certified region cardinality is zero.
    }
    \label{tbl:cr-statistics-ablation}
\end{table}

As shown in Table~\ref{tbl:cr-statistics-ablation},
increasing the lower bound $p_\lb$ (and reducing the retention count $k = \floor{(1 - p_\lb) \E\{\abs{\vec{x}}\}}$) in 
\vardel\ consistently improves the certified robustness across all datasets,
albeit at a modest cost to clean accuracy.
For instance, on the \imdb\ dataset, raising $p_\lb$ from $50\%$ to $95\%$ enhances the mean certified radius from $0.34$ to $4.26$,
with a corresponding increase in the aggregated log-cardinality from $0.00$ to $20.87$.
Similar trends are observed on the \assassin\ and \lun\ datasets,
where higher $p_\lb$ values lead to significant gains in both the mean certified radius and certified region size.
This demonstrates that enforcing more aggressive deletion rates amplifies the certifiability of \vardel, especially on longer-input datasets.
Furthermore, the Wasserstein distance between certified radius and input length steadily decreases with higher $p_\lb$,
suggesting that the certified radius becomes more input-length-aware as deletion strengthens.
These results underscore the critical role of tuning deletion rates in balancing clean accuracy and robustness guarantees,
enabling \vardel\ to scale effectively with input complexity.

\section{Analysis of Computational Efficiency}
\label{app-sec:eval-efficiency}
In this appendix, we document the computational requirements for training and certifying the models used in our study.
We begin by benchmarking certification time for each method on the \assassin\ dataset.
Next, we report the compute costs of our remaining experiments (both training and certification) as presented in the main paper.
Overall, we demonstrate that \vardel\ and \optdel\ match or exceed the efficiency of \randel\ and \ranmask\ in both training and certification.

All experiments in this paper are conducted using a private cluster with Intel(R) Xeon(R) Gold 6326 CPU at 2.90GHz and NVIDIA A100 GPUs.
Unless otherwise specified, we use a single GPU for all experiments.

\subsection{Standardized Cost Comparison for Certification}
\label{app-sec:computation-complexity-cost}

\begin{figure}[!htbp]
    \centering
    \subfigure[\ranmask]{
        \includegraphics[width=0.31\linewidth]{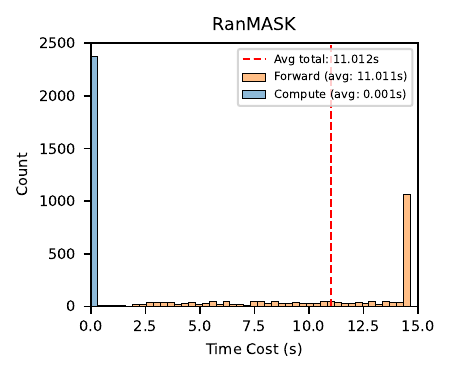}
    }
    \subfigure[\randel]{
        \includegraphics[width=0.31\linewidth]{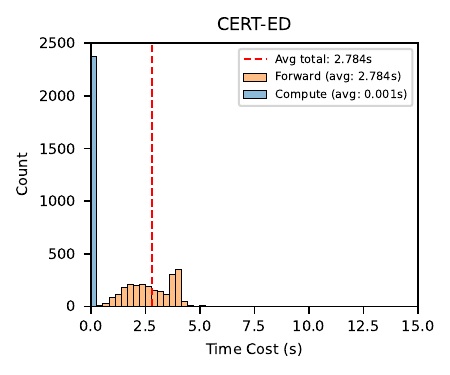}
    }
    \subfigure[\vardel]{
        \includegraphics[width=0.31\linewidth]{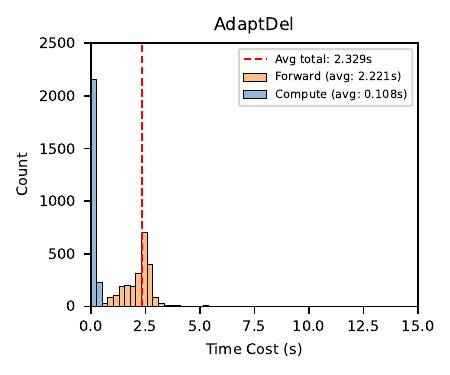}
    }
    \caption{
        From left to right are the histogram of computation cost for forward operation and computing certified radius
        for \ranmask, \randel\ and \vardel, respectively.
        For this experiment, we set the batch size to 256 and the number of samples for certification to be 4096.
        As a result, a total of 16 batches are required to compute the certified radius.
        The histogram shows the distribution of the computation cost for each method.
        The total cost of AdaptDel is lower compared to all other methods despite the increased complexity in compute time.
        This is because the adaptive deletion rate provides shorter smoothed input size for longer inputs,
        significantly reducing the cost of the forward computation.
    }
    \label{fig:computation-cost}
\end{figure}

The certification process consists of two key stages: Monte Carlo estimation and the certification logic. 
In the first stage, we estimate the smoothed classifier's output probabilities for the input $\vec{x}$ by making 
$N_\mathrm{pred}=1000$ and $N_\mathrm{cert}=4000$ forward calls to the base model, as shown in Lines~\ref{alg-line:low-conf-1}--\ref{alg-line:upp-conf-2} of Algorithm~\ref{alg:certify-length-dep}. 
This stage dominates the overall runtime. 
In the second stage, the certification logic (Lines 3-9) uses these estimated probabilities to find the largest 
certified radius. 
A loose bound on the asymptotic complexity of this second step is $O(\radius^4)$ for certified radius $\radius$.
This follows since Algorithm~\ref{alg:certify-length-dep} (Lines 3-9) performs a linear search for the radius 
$\radius$ (Line~\ref{alg-line:loop-radii}), contributing a factor of $O(\radius)$. 
Within this loop, it iterates through all edit combinations $\mathcal{C}(\opset, \radius)$ in Line~\ref{alg-line:loop-counts}, of which there are $O(\radius^2)$. 
The inner bound calculations (Lines~\ref{alg-line:est-lb} and \ref{alg-line:est-ub}) are approximately linear in 
the number of edits, $O(\radius)$. 

Despite this asymptotic complexity, standardized empirical tests show \vardel\ runs
$20\%$ and $372\%$ faster than \randel\ and \ranmask, respectively (Figure~\ref{fig:computation-cost}).
Notably, we evaluate runtime efficiency using the SpamAssassin dataset---chosen specifically as the most challenging scenario due to its longer certified radii and shorter sequence lengths.
Since the efficiency gains of \vardel\ are most pronounced on longer inputs where it can apply a higher deletion rate, benchmarking on a dataset with shorter sequences provides a more conservative estimate of its performance advantage.
The observed efficiency improvement is primarily due to reduced forward-pass times resulting from adaptive deletion 
rates.
In practice, computing the certified radius itself accounts for less than $5\%$ of the total runtime.

\subsection{Train}
\begin{table*}
    \caption{Training time statistics for each dataset and model. The number of epochs varies due to early stopping.}
    \label{tbl:train-time}
    \centering
    \small
    \begin{NiceTabular}{llrrrr}
        \toprule
                                        &           & \multicolumn{4}{c}{\bfseries Dataset\slash Number of samples} \\
        \cmidrule{3-6}
        \textbf{Model}                  & \textbf{Statistic} & \yelp\slash 585\,000   & \assassin\slash 2\,152 & \imdb\slash 22\,500 & \lun\slash 13\,416 \\
        \midrule
        \multirow{2}{*}{\ns}            & epochs    & 30       & 40        & 30      & 55      \\
                                        & sec/epoch & 6\,269   & 27        & 258     & 143     \\
        \midrule
        \multirow{2}{*}{\ranmask, 90\%} & epochs    & 60       & 50        & 60      & 65      \\
                                        & sec/epoch & 6\,252   & 35        & 341     & 258     \\
        \midrule
        \multirow{2}{*}{\randel, 90\%}  & epochs    & 60       & 40        & 65      & 60      \\
                                        & sec/epoch & 2\,334   & 13        & 128     & 55      \\
        \midrule
        \multirow{2}{*}{\vardel, 90\%}  & epochs    & 155      & 55        & 85      & 70      \\
                                        & sec/epoch & 989      & 6         & 50      & 41      \\
        \midrule
        \multirow{2}{*}{\optdel}        & epochs    & 90       & 65        & 140     & 75      \\
                                        & sec/epoch & 3\,146   & 14        & 73      & 83      \\
        \bottomrule
    \end{NiceTabular}
\end{table*}

Table~\ref{tbl:train-time} shows the number of epochs used to train each model\slash dataset (with early stopping) and the training time per epoch.
While \vardel\ is the fastest in training, \optdel\ takes similar time to train compared to \randel, which is 2--3~times faster than the non-smoothed baseline, and 2--5~times faster to train than \ranmask.
The total computation used across all datasets for certification is estimated to be 300~hours A100~GPU time.

In addition to the training times reported in Table~\ref{tbl:train-time}, the \optdel\ method requires a one-time, 
offline calibration step to determine the optimal expected lengths for each bin, as detailed in 
Appendix~\ref{app-sec:opt-del}. 
This process is performed after the base model is trained and does not add to the online certification time. 
On a single NVIDIA A100 GPU, the total calibration times were: \yelp~(9.5~hrs), \imdb~(12~hrs), \assassin~(8~hrs), and 
\lun~(12~hrs).

\subsection{Certification}
\begin{table*}    
    \begin{center}
    \small
    \begin{NiceTabular}{lrrrr}
        \toprule
                       & \multicolumn{4}{c}{\bfseries Dataset\slash Number of samples} \\
        \cmidrule{2-5}
        \textbf{Model} & \yelp\slash 10\,000   & \imdb \slash 25\,000  & \assassin\slash 2\,378 & \lun\slash 6\,454 \\
        \midrule
        \ranmask, 90\% & 9\,313  & 13\,331 & 13\,899   & 14\,641 \\
        \randel, 90\%  & 2\,633  & 3\,311  & 3\,319    & 4\,819  \\
        \vardel, 90\%  & 2\,129  & 2\,896  & 4\,090    & 3\,174  \\
        \optdel        & 2\,469  & 1\,812  & 1\,857    & 2\,235  \\
        \bottomrule
    \end{NiceTabular}
    \end{center}
    \caption{
        Certification time in milliseconds per sample on the test set for each dataset, including overheads.
        We use 1000 Monte Carlo samples for prediction and 4000 samples for estimating certified radii.
    }
    \label{tbl:certify-time}
\end{table*}

Table~\ref{tbl:certify-time} shows the average certification time per test instance, including overheads.
Our approaches are upto 2 and 10~times faster than \randel\ and \ranmask.
The total computation used across all datasets for certification is estimated to be 240~hours A100~GPU time.

\section{Empirical Robustness Against Text Attacks}
\label{app-sec:attacks}

While the primary focus of our work is on certified robustness, we also evaluate the empirical robustness of our methods against several common text-based adversarial attacks. This analysis provides a more practical perspective on model resilience against existing attack heuristics.

\paragraph{Experimental Setup}
Our experimental setup largely follows the methodology of \citet{huang2024cert}. We evaluate all models on a randomly selected subsample of 200 instances from each dataset's test set. For the smoothing-based defenses (\vardel, \randel, and \ranmask), we use a Monte Carlo estimation with 100 samples to approximate the smoothed classifier's prediction. This is a smaller sample size than used for certification, a necessary compromise to make the computational cost of attacking these models feasible. To ensure all attacks terminate within a reasonable timeframe, we enforce a query budget of 3000 calls to the target model and a time budget of 2 hours per instance.

\paragraph{Attack Outcomes}
Each attack on a given instance can result in one of four outcomes:
\begin{itemize}
    \item \textbf{Success:} The attack successfully finds an adversarial example that changes the model's correct prediction to an incorrect one.
    \item \textbf{Failure:} The attack terminates without finding an adversarial example. This occurs if the attack exhausts its search space or reaches the predefined query limit of 3000 model evaluations.
    \item \textbf{Skipped:} The original input is misclassified by the model, so no attack is initiated.
    \item \textbf{Timeout:} The attack exceeds the time limit of 7200 seconds before terminating.
\end{itemize}

\paragraph{Metrics and Baselines}
We measure performance using \emph{robust accuracy}, defined as the fraction of non-skipped instances for which the attack outcome was either `Failure' or `Timeout'. This metric captures the percentage of initially correct predictions that remain correct after the attack.

We evaluate \vardel\ against several baselines: the smoothing-based methods \randel~\citep{huang2024cert} and \ranmask~\citep{zeng2023certified}, and the adversarial training method \freelb~\citep{zhu2020freelb}. 
These models are tested against five representative attacks using the TextAttack framework~\citep{morris2020textattack}, which cover a diverse range of perturbation strategies:
\begin{itemize}
    \item \textbf{General Edit Distance Attacks:} \clare~\citep{li2021contextualized} and \bae~\citep{garg2020bae} search for adversarial examples by applying a combination of word insertions, deletions, and substitutions.
    \item \textbf{Word Substitution Attacks:} \bertattack~\citep{li2020bertattack} and \textfooler~\citep{jin2020bert} craft attacks by replacing important words with semantically similar substitutes.
    \item \textbf{Character-level Attack:} \deepwordbug~\citep{gao2018black} perturbs the input by applying small character-level edits (e.g., swaps, deletions) to words.
\end{itemize}

\paragraph{Results and Discussion}
The robust accuracy of each model against five widely-used attacks is summarized in Table~\ref{tbl:empirical-robustness}. We observe that \vardel\ achieves a robust accuracy that is broadly comparable to the \randel\ baseline across most attacks, with differences often falling within the margin of statistical error for the given sample size.

This result is consistent with the primary contribution of our work. The main advantage of \vardel\ is its ability to provide stronger \emph{certified} robustness guarantees, particularly for longer sequences, by adapting the deletion rate. This adaptation is optimized for the worst-case analysis required for certification, which does not necessarily translate to a direct advantage against the specific heuristics employed by these empirical attacks. Therefore, while \vardel\ demonstrates strong empirical resilience on par with existing methods, its key benefit remains in the domain of provable security, where it significantly expands the size of the certified region.

\begin{figure}
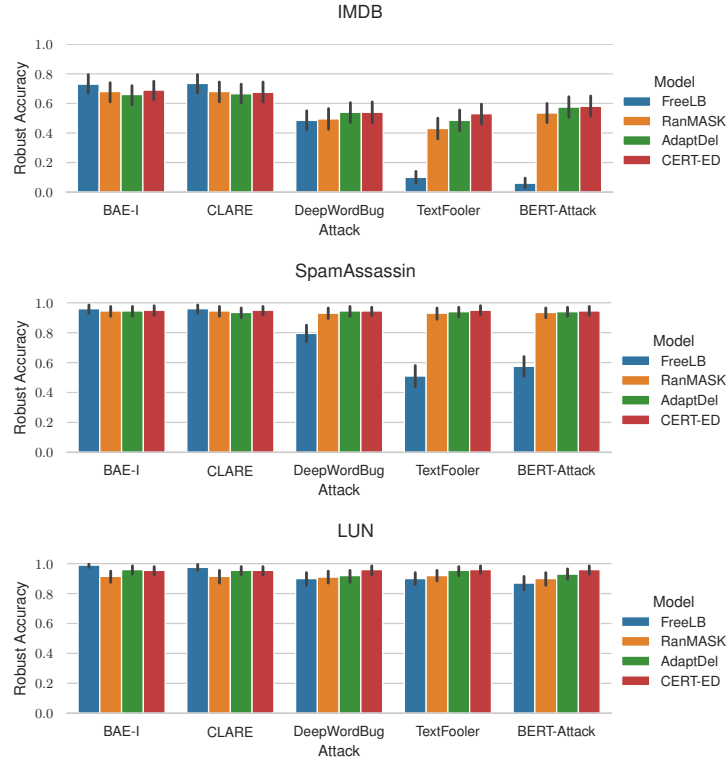

    \centering
    \resizebox{0.7\textwidth}{!}{\input{figures/attack-small.roberta-base.IMDB.robust_accuracy_bar_plot.pgf}}
    \resizebox{0.7\textwidth}{!}{\input{figures/attack-small.roberta-base.SpamAssassin.robust_accuracy_bar_plot.pgf}}
    \resizebox{0.7\textwidth}{!}{\input{figures/attack-small.roberta-base.LUN.robust_accuracy_bar_plot.pgf}}
    \caption{Robust accuracy of \vardel\ compared to baseline defenses against five common adversarial text attacks. 
    Error bars indicate 95\% bootstrap confidence intervals. 
    Across the three datasets, \vardel\ demonstrates empirical robustness that is statistically comparable to the \randel\ baseline. All smoothing-based methods are highly robust on the LUN and SpamAssassin datasets, while the adversarial training method \freelb\ shows vulnerability to specific word-substitution attacks.}
    \label{fig:empirical-robustness}
\end{figure}

\begin{table}
    \begin{center}
    \small
    \begin{NiceTabular}{llllll}%
        [code-before=
        \rowcolor{gray!10}{3}
        \rowcolor{gray!10}{8}
        \rowcolor{gray!10}{13}
        ]
        \toprule
         & \multicolumn{5}{c}{\textbf{Attack}} \\
        \cmidrule{2-6}
        \textbf{Model} & BAE-I & BERT-Attack & CLARE & DeepWordBug & TextFooler \\
        \midrule
        \multicolumn{6}{c}{\textbf{Dataset:} IMDB} \\
        \midrule
        AdaptDel & 0.860 / 0.660 & 0.865 / 0.575 & 0.890 / 0.665 & 0.885 / 0.540 & 0.860 / 0.485 \\
        CERT-ED & 0.885 / 0.690 & 0.860 / 0.580 & 0.880 / 0.675 & 0.880 / 0.540 & 0.880 / 0.530 \\
        FreeLB & 0.940 / 0.730 & 0.940 / 0.060 & 0.940 / 0.735 & 0.940 / 0.485 & 0.940 / 0.100 \\
        RanMASK & 0.875 / 0.680 & 0.885 / 0.535 & 0.880 / 0.680 & 0.880 / 0.495 & 0.880 / 0.430 \\
        \midrule
        \multicolumn{6}{c}{\textbf{Dataset:} LUN} \\
        \midrule
        AdaptDel & 0.980 / 0.960 & 0.980 / 0.930 & 0.980 / 0.955 & 0.985 / 0.920 & 0.985 / 0.955 \\
        CERT-ED & 0.995 / 0.955 & 0.995 / 0.960 & 0.995 / 0.955 & 0.990 / 0.960 & 1.000 / 0.960 \\
        FreeLB & 0.995 / 0.990 & 0.995 / 0.870 & 0.995 / 0.975 & 0.995 / 0.900 & 0.995 / 0.900 \\
        RanMASK & 0.980 / 0.915 & 0.980 / 0.900 & 0.975 / 0.915 & 0.970 / 0.910 & 0.970 / 0.920 \\
        \midrule
        \multicolumn{6}{c}{\textbf{Dataset:} SpamAssassin} \\
        \midrule
        AdaptDel & 0.975 / 0.945 & 0.970 / 0.940 & 0.970 / 0.935 & 0.970 / 0.945 & 0.960 / 0.940 \\
        CERT-ED & 0.965 / 0.950 & 0.965 / 0.945 & 0.965 / 0.950 & 0.970 / 0.945 & 0.970 / 0.950 \\
        FreeLB & 0.970 / 0.960 & 0.970 / 0.575 & 0.970 / 0.960 & 0.970 / 0.795 & 0.970 / 0.510 \\
        RanMASK & 0.965 / 0.945 & 0.965 / 0.935 & 0.955 / 0.945 & 0.965 / 0.930 & 0.965 / 0.930 \\
        \bottomrule
    \end{NiceTabular}
    \end{center}
    \caption{Empirical robustness results of different models against various attacks on three datasets. 
    Each cell shows the accuracy before and after the attack (e.g., 0.860 / 0.660 means 86.0\% accuracy before the 
    attack and 66.0\% accuracy after the attack).}
    \label{tbl:empirical-robustness}
\end{table}

\section{Impact Statement} \label{app-sec:impact}
Where adversarial examples present an important threat to the integrity of learned models,
randomized smoothing reflects a well-documented defense, while certifications offer a reliable measurement of robustness.
Together, this research offers potential societal benefit from more resilient AI systems,
and better transparency of limitations of robustness.
Our work furthers the study of more realistic threat models than $\ell_p$-bounded perturbations found in vision,
with edit distance and sequence classifiers more suitable to NLP.
With longer sequences offering greater margin for smoothing by deletion without sacrificing accuracy,
we are able to better balance utility and certification.

\end{document}